\documentclass{article}

\PassOptionsToPackage{round}{natbib}

\usepackage[preprint]{neurips_2026}

\usepackage[utf8]{inputenc}
\usepackage[T1]{fontenc}
\usepackage{lmodern}
\usepackage[hidelinks]{hyperref}
\usepackage{url}
\usepackage{booktabs}
\usepackage{amsfonts}
\usepackage{amsmath,amssymb,amsthm,mathtools,bm}
\usepackage{nicefrac}
\usepackage{microtype}
\usepackage{xcolor}
\usepackage{graphicx}
\usepackage{multirow}
\usepackage{caption}
\usepackage{subcaption}
\usepackage{enumitem}
\usepackage{algorithm}
\usepackage{algpseudocode}
\usepackage{tikz}
\usetikzlibrary{positioning,arrows.meta,fit,calc}

\newtheorem{proposition}{Proposition}
\theoremstyle{remark}
\newtheorem{remark}{Remark}

\title{Spatial Adapter: Structured Spatial Decomposition and Closed-Form Covariance for Frozen Predictors}

\author{%
  Wen-Ting Wang \\
  Institute of Statistics, National Chung Hsing University \\
  Department of Applied Mathematics, National Dong Hwa University \\
  \texttt{egpivo@gmail.com}
  \And
  Wei-Ying Wu\thanks{Partially supported by
    NSTC 113-2118-M-259-002-MY2 and NDHU Funding 114T2560-03.} \\
  Department of Applied Mathematics \\
  National Dong Hwa University \\
  \texttt{wuweiying1011@gms.ndhu.edu.tw}
  \AND
  Hao-Yun Huang\thanks{Corresponding author.  Partially supported by
    NSTC 113-2118-M-259-001-MY2 and NDHU Funding 114T2560-03.} \\
  Department of Applied Mathematics \\
  National Dong Hwa University \\
  \texttt{hhuscout@gms.ndhu.edu.tw}
  \And
  Xuan-Chun Wang \\
  Department of Applied Mathematics \\
  National Dong Hwa University \\
  \texttt{wangxc1117@gmail.com}
}

\begin{document}

\maketitle

\begin{abstract}
We present the \textbf{Spatial Adapter}, a parameter-efficient
post-hoc layer that equips any frozen first-stage predictor with a
\emph{structured spatial representation} of its residual field and
an induced closed-form spatial covariance.  The adapter operates as
a cascade second stage on residuals, jointly learning a spatially
regularized orthonormal basis and per-sample scores via a tractable
mini-batch ADMM procedure, without modifying any first-stage
parameter.  Because the first-stage parameters are frozen, the adapter does
not retrain the backbone; its role is to supply a compressed
distributional summary of the residual field.  Smoothness, sparsity, and orthogonality together turn a
generic low-rank factorization into an identifiable spatial
representation whose induced residual covariance admits a
closed-form low-rank-plus-noise estimator; the \emph{effective
rank} is determined data-adaptively by spectral thresholding, while
the nominal rank $K$ is an optimization-side upper bound only.  This
covariance enables kriging-style spatial prediction at unobserved
locations, with plug-in uncertainty quantification as a secondary
downstream use.  Across synthetic data, Weather2K (spatial-holdout prediction),
and GWHD patch grids (basis-transferability diagnostic), the
adapter recovers residual spatial structure when paired with
frozen first stages from linear models to deep spatiotemporal
and vision backbones;
the added representation uses fewer than $K(N{+}T)$ parameters
alongside a compact residual-trend network.
\end{abstract}

\section{Introduction}

Large pretrained neural backbones~\citep{he2016deep,dosovitskiy2021vit}
now dominate vision, language, and scientific modelling, and a
thriving parameter-efficient adaptation
literature~\citep{hu2022lora,houlsby2019adapter} lets practitioners
steer them to new tasks at a fraction of the cost of retraining.  Yet
for any problem where the response lives on a discrete set of spatial
locations $\{\mathbf s_i\}_{i=1}^{N}$---weather stations scattered
across a region, pixels or patches on a regular image grid, cells in
a simulation mesh, or sensor arrays in a building---the predictor's
point-prediction loss places no constraint on the spatial structure
of its residuals, leaving systematic neighbour-to-neighbour
correlation unmodelled in the output.  Spatial statistics has exactly
the tools for this gap: hierarchical Gaussian
processes~\citep{gelfand2003spatial}, fixed-rank
kriging~\citep{cressie2008fixed}, regularized spatial
PCA~\citep{wang2017regularized}, empirical orthogonal function
(EOF) decompositions long standard in climate and geophysical
analysis, and resolution-adaptive
extensions~\citep{tzeng2018resolution}, all rooted in low-rank or
sparse decompositions of the spatial covariance---and all
delivering \emph{closed-form covariance-based uncertainty
quantification} as a direct by-product.  Modern scientific
deep-learning pipelines, however, tend to replace these tools
with opaque representations that improve point accuracy but
shed the covariance structure, the explicit low-rank factors,
and the closed-form uncertainty quantification that the classical
tools provided---a tension between predictive performance and
statistical interpretability recently highlighted from the
statistics side by~\citet{wikle2023statistical}.  Compounding
the gap, these classical methods also scale poorly in $N$ and
do not naturally interoperate with deep backbones.
Classical geostatistics decomposes the response as
$\mathbf Y(t,\mathbf s)=\mu(t,\mathbf s)+w(t,\mathbf s)+\bm\epsilon(t,\mathbf s)$,
with $\mu$ a mean trend, $w$ a zero-mean spatial process, and
$\bm\epsilon$ white noise; our goal is to recover this decomposition in modern predictive
pipelines by taking $\mu$ to be a pretrained first-stage predictor
and replacing
$w$ with a learnable low-rank spatial representation, without
retraining the first stage.

\paragraph{Spatial Adapter.}
We introduce the \emph{Spatial Adapter}, a post-hoc layer that
extracts structured spatial signal from the residual field of any
frozen first-stage predictor into a low-rank representation
$\bm\Phi(\mathbf s)^{\!\top}\bm\alpha_{j}$.  Because the first-stage
parameters are frozen, the adapter does not retrain the backbone;
rather, it supplies a compressed distributional summary of the
residual field from which an explicit spatial covariance follows
directly, with interval estimation as one downstream use of the
same covariance machinery.
Stage~1 fits a first-stage predictor $f_{\bm\theta}$ by any
standard recipe.  Stage~2 freezes $\bm\theta$ and jointly fits a
compact residual-trend network $\mu_{\bm\psi}$, a
spatially-smooth orthonormal basis $\bm\Phi$, and per-sample
scores $\bm\alpha_{j}$ (indexed by time step or by image,
depending on the task) by minimizing the task loss plus spatial
regularizers, via a three-block mini-batch
ADMM~\citep{boyd2011distributed}.  The spatial index $\mathbf s$
is general: the same machinery applies to geographic station
coordinates and to within-image patch grids alike.  The
Gaussian/identity case admits a closed-form Z-update and an
eigendecomposition-based basis update inside the ADMM skeleton
(the trend step remains a gradient step in both cases), while
the Bernoulli/sigmoid case uses the same skeleton with
surrogate-based approximate updates (including a variance-only
simplification) that we validate empirically.
Plugging the learned $\widehat{\bm\Phi}$ into the fixed-rank
decomposition of~\citet{wang2017regularized} immediately yields a
closed-form spatial covariance estimator at no extra optimization
cost.  This covariance enables two operational modes: spatial
\emph{reconstruction} at observed locations, where the adapter
refines the first-stage prediction by projecting residuals onto the
learned basis; and spatial \emph{prediction} at unobserved
locations, where kriging-style conditioning on the estimated
covariance yields a point predictor together with a task-agnostic
closed-form covariance and plug-in variance machinery; interval
construction on the probability scale is illustrated diagnostically
in the appendix.

\paragraph{Contributions.}\label{sec:intro_contributions}
\begin{enumerate}[leftmargin=1.5em,itemsep=1pt]
  \item \textbf{Residual spatial representation for frozen
        predictors.}  The Spatial Adapter operates on the residual
        field of a frozen first-stage predictor rather than on its
        parameters, learning an explicit low-rank representation
        with a spatially regularised orthonormal basis $\bm\Phi$
        and per-sample scores $\bm\alpha_{j}$.
  \item \textbf{Structured covariance induced by the representation.}
        $\widehat{\bm\Phi}$ directly induces a closed-form
        low-rank-plus-noise covariance
        estimator~\citep{wang2017regularized}; the \emph{effective}
        rank is selected data-adaptively by spectral thresholding
        rather than fixed at the optimization-side rank $K$.
  \item \textbf{Mini-batch ADMM with closed-form Gaussian updates.}
        A three-block mini-batch ADMM fits the adapter.  The
        Gaussian/identity consensus update is closed form
        (Proposition~\ref{prop:z-update}) and the basis step is an
        eigendecomposition; the Bernoulli case reuses the skeleton
        with surrogate-based approximate updates
        (Remark~\ref{rem:bce-basis-step}).
  \item \textbf{Empirical validation.}  Evaluated on synthetic data,
        Weather2K (spatial-holdout prediction), and GWHD patch
        grids (basis-transferability diagnostic), paired with first
        stages from OLS to deep spatiotemporal and vision backbones.
        Kriging-style prediction and plug-in logistic-normal
        intervals are reported as downstream uses of the learned
        covariance, not separate primary contributions.
\end{enumerate}

\paragraph{Positioning.}
We inherit the post-hoc, frozen-backbone philosophy of the
adapter/LoRA tradition~\citep{houlsby2019adapter,hu2022lora,han2024peftsurvey,liu2024dora,yang2024bayesianlora}
but operate on the residual field rather than on model
weights---placing the structured decomposition on what the
predictor \emph{failed to explain}, not on how it computes.
Unlike parameter-space PEFT variants that modify or augment model
weights, the Spatial Adapter does not adapt weights; it learns a
structured low-rank representation on the \emph{residual field}
of a frozen predictor.  The crucial
ingredient is not low rank alone but a spatially regularized
functional structure (smoothness, local support), which makes
closed-form covariance estimation
$\bm\Phi\,\Lambda\,\bm\Phi^{\!\top}$ a direct by-product.  The
closest statistical precursors are structured basis-function
models in spatial statistics---fixed-rank
kriging~\citep{cressie2008fixed}, regularized spatial
PCA~\citep{wang2017regularized}, and their
extensions~\citep{tzeng2018resolution}; the Spatial Adapter
recovers SpatPCA as a limiting case (Section~\ref{sec:methodology}).
In short, the adapter is primarily a representation tool;
closed-form covariance is a natural consequence of the
identifiable, spatially regularized structure, and interval
estimation is one downstream statistical use of that covariance.

\section{Methodology}\label{sec:methodology}

We cast the Spatial Adapter as a generalized linear model
(Section~\ref{subsec:model}) whose latent field decomposes additively
into a frozen first-stage trend, a compact residual correction,
and an explicit low-rank spatial representation, fitted in two stages
by a mini-batch ADMM
(Sections~\ref{subsec:twostage}--\ref{subsec:admm}).  As a byproduct of Stage~2, the learned basis $\widehat{\bm\Phi}$
yields a closed-form spatial covariance estimator
$\widehat\Sigma_{\mathbf r}$ (Section~\ref{subsec:cov-est}) that
enables two canonical uses of a spatial covariance---kriging-style
prediction at unobserved locations and a link-scale plug-in
variance/interval machinery.  The kriging use is derived in
Section~\ref{subsec:cov-est} and instantiated in the Weather2K
spatial-holdout experiment (Section~\ref{subsec:weather2k});
the link-scale logistic-normal pushforward to the binary case is
illustrated diagnostically in Appendix~\ref{app:bce-ablation}.  Figure~\ref{fig:method-overview}
gives the architectural overview.

\begin{figure}[t]
\centering
\resizebox{0.92\linewidth}{!}{%
\begin{tikzpicture}[
  font=\small,
  >={Stealth[length=2mm]},
  thick,
  box/.style={draw, rounded corners=3pt, align=center, inner sep=3pt},
  frozen/.style={box, fill=gray!12, draw=gray!55!black, text width=2.0cm},
  adapter/.style={box, fill=blue!8,  draw=blue!55!black, text width=2.2cm},
  repr/.style={box, fill=blue!3,  draw=blue!35!black, text width=2.2cm},
  refine/.style={box, fill=green!15, draw=green!45!black, text width=2.0cm},
  stats/.style={box, fill=orange!12, draw=orange!65!black, text width=2.0cm}
]
\node[frozen] (backbone) at (0,0) {
  \textbf{Frozen first-stage}\\
  \textbf{predictor}\\[2pt]
  \scriptsize residual spatial\\ structure
};
\node[adapter] (adapt) at (3.2,0) {
  \textbf{Spatial Adapter}\\[2pt]
  \scriptsize residual trend +\\
  structured low-rank\\
  correction
};
\node[repr] (repr) at (6.4,0) {
  \textbf{Explicit spatial}\\
  \textbf{representation}\\[2pt]
  \scriptsize $\widehat{\bm\Phi}(\mathbf s)^{\!\top}\widehat{\bm\alpha}_{j}$
};
\node[refine] (refine) at (9.5,0.9) {
  \textbf{Prediction}\\[2pt]
  \scriptsize $\widehat{\mathbf Y}$
};
\node[stats] (stats) at (9.5,-0.9) {
  \textbf{Covariance and}\\
  \textbf{uncertainty}\\[2pt]
  \scriptsize $\widehat\Sigma_{\mathbf r}$
};
\draw[->] (backbone.east) -- (adapt.west);
\draw[->] (adapt.east)    -- (repr.west);
\draw[->] (repr.east)     -- (refine.west);
\draw[->] (repr.east)     -- (stats.west);
\end{tikzpicture}%
}
\caption{\textbf{Spatial Adapter overview.}
The adapter extracts an explicit low-rank spatial representation
$\widehat{\bm\Phi}(\mathbf s)^{\!\top}\widehat{\bm\alpha}_{j}$
from the residual field of a frozen first-stage predictor,
supporting prediction together with closed-form covariance
estimation and downstream uncertainty quantification.
Stage-2 ADMM details: Algorithm~\ref{alg:admm}.}
\label{fig:method-overview}
\end{figure}

\subsection{Problem setup}\label{subsec:setup}

We observe a response matrix $\mathbf Y\in\mathcal Y^{T\times N}$
on a discrete spatial index set
$\mathcal S=\{\mathbf s_{i}\}_{i=1}^{N}$ over repetition indices
$j=1,\dots,T$, with $\mathcal Y=\mathbb R$ for regression and
$\mathcal Y=\{0,1\}$ for binary classification.  No assumption is
made on the geometry of $\mathcal S$: $\mathbf s_{i}$ may be a
geographic coordinate (with $j$ a time step) or a patch position on
a regular image grid (with $j$ an image index), and the formulation
that follows covers both interpretations.  Each entry carries a
covariate vector $\mathbf x_{j,i}\in\mathbb R^{p}$, gathered into a
tensor $\mathbf X\in\mathbb R^{T\times N\times p}$.  Throughout,
$g:\mathbb R\!\to\!\mathbb R$ denotes a fixed, monotone link
function applied entry-wise---the identity for regression and the
sigmoid for binary classification.

\textbf{Scope.}  The adapter acts only on the \emph{spatial} axis
$\mathbf s$: temporal or cross-sample dependence on $j$ is
delegated to the first-stage predictor, and Stage~2 places no
prior on $\bm\alpha_{j}$ across $j$.  When this dependence is not
absorbed by the first stage, the working-independence
$\widehat\Lambda$ may understate the marginal second moment of
$\bm\alpha_{j}$ and propagate into the plug-in variance; pairing
with temporally aware backbones mitigates this in practice.

\subsection{Generalized latent-field model}\label{subsec:model}

The response is modelled as a GLM with latent field
$\bm\eta\in\mathbb R^{T\times N}$,
$\mathbf Y_{j,i}\sim p(\cdot\mid\bm\eta_{j,i})$,
$\mathbb E[\mathbf Y_{j,i}\mid\bm\eta_{j,i}]=g(\bm\eta_{j,i})$
(Gaussian/identity for regression, Bernoulli/sigmoid for binary),
with additive decomposition
\begin{equation}\label{eq:latent-decomp}
\bm\eta_{j,i}
\;=\;
\underbrace{f_{\bm\theta}(\mathbf x_{j,i})}_{\text{frozen first stage}}
\;+\;
\underbrace{\mu_{\bm\psi}(\mathbf x_{j,i})}_{\text{residual trend}}
\;+\;
\underbrace{\sum_{k=1}^{K}\alpha_{k,j}\,\phi_{k}(\mathbf s_{i})}_{\text{low-rank spatial}}.
\end{equation}
$f_{\bm\theta}$ is any frozen predictor (deep model or linear fit);
$\mu_{\bm\psi}$ is a compact residual trend, zero-initialised and
refined in Stage~2; the basis
$\bm\Phi=[\phi_{k}(\mathbf s_{i})]_{i,k}\in\mathbb R^{N\times K}$
is regularised by a thin-plate-spline (TPS) roughness matrix
$\bm\Omega$~\citep{green1993nonparametric,wood2017generalized},
whose continuous-domain smoothness lets $\widehat{\bm\Phi}$ be
post-hoc extended to query points
$\mathbf s^{*}\!\notin\!\{\mathbf s_{i}\}$ via TPS interpolation
(used by the kriging predictor of Section~\ref{subsec:cov-est}).
Scores $\bm\alpha_{j}\!\in\!\mathbb R^{K}$ stack into
$\mathbf A\!\in\!\mathbb R^{T\times K}$; Stage~2 places no prior
on $\mathbf A$ across $j$, and the covariance estimator of
Section~\ref{subsec:cov-est} uses only its second moment
(\emph{working-independence}; residual temporal or cross-sample
dependence is delegated to the first stage).  In matrix form,
\begin{equation}\label{eq:matrix-H}
\bm\eta
\;=\;
\mathbf F_{\bm\theta}+\mathbf M_{\bm\psi}+\mathbf A\,\bm\Phi^{\!\top}.
\end{equation}

\subsection{Two-stage training objective}\label{subsec:twostage}

\paragraph{Stage 1 (warm-up).}
With $f_{\bm\theta}$ frozen, we pretrain the residual-trend network
$\mu_{\bm\psi}$ by minimizing the task loss
$\mathcal L_{\mathrm{task}}(g(f_{\bm\theta}+\mu_{\bm\psi}),\mathbf Y)$
(MSE for regression, BCE for binary classification); no spatial
basis is used yet.  Stage 1 supplies a warm start that Stage 2
refines inside the ADMM loop.

\paragraph{Stage 2 (adapter).}
Define the natural-parameter residual
\begin{equation}\label{eq:residual-R}
  \mathbf R(\bm\psi)
  \;\coloneqq\;
  g^{\dagger}(\mathbf Y_{\varepsilon})
  - \mathbf F_{\bm\theta}
  - \mathbf M_{\bm\psi}(\mathbf X)
  \;\in\;\mathbb R^{T\times N},
\end{equation}
where $g^{\dagger}$ is the canonical inverse link and
$\mathbf Y_{\varepsilon}$ clamps binary labels to
$[\varepsilon,1-\varepsilon]$ to keep the logit finite.
For Bernoulli outputs, $g^{\dagger}(\mathbf Y_{\varepsilon})$ is a
link-scale \emph{pseudo-response} rather than a direct observation
of the latent field; Stage~2 is accordingly interpreted as a
surrogate-based residual adaptation on the natural-parameter
scale, not an exact latent-variable likelihood decomposition.
Under~\eqref{eq:latent-decomp} the residual should lie in the span
of $\bm\Phi$---i.e.\ its off-basis component
$\mathbf R\mathbf P^{\!\perp}$
($\mathbf P=\bm\Phi\bm\Phi^{\!\top}$,
$\mathbf P^{\!\perp}=\mathbf I_{N}-\mathbf P$) should be small.  We
fit $(\bm\psi,\bm\Phi)$ by solving
\begin{equation}\label{eq:loss-general}
  \min_{\bm\psi,\,\bm\Phi}\;
    \ell_{\mathrm{data}}\!\bigl(\mathbf R(\bm\psi)\,\mathbf P^{\!\perp}\bigr)
    +\lambda_{1}\operatorname{tr}\!\bigl(\bm\Phi^{\!\top}\bm\Omega\,\bm\Phi\bigr)
    +\lambda_{2}\|\bm\Phi\|_{1}
  \quad\text{s.t.}\quad
    \bm\Phi^{\!\top}\bm\Phi=\mathbf I_{K},
\end{equation}
with $\ell_{\mathrm{data}}(\mathbf M)=\|\mathbf M\|_{F}^{2}$ in the
Gaussian/identity case (admitting closed-form Z-update and
eigendecomposition-based basis update; trend step is a gradient
step regardless) and
$\ell_{\mathrm{data}}=\sum_{j,i}\mathrm{BCE}(\sigma(\mathbf M_{j,i}),\mathbf Y_{j,i})$
in the Bernoulli case (handled by the surrogate of
Remark~\ref{rem:bce-basis-step}).  The roughness penalty
$\lambda_{1}$ imposes spatial smoothness, $\lambda_{2}$ promotes
local support, and the Stiefel constraint identifies $\bm\Phi$ up
to per-column sign.  Hyperparameters $(\lambda_{1},\lambda_{2})$
are tuned via Optuna~\citep{akiba2019optuna} (default objective
$\ell_{\text{val}}=\mathrm{RMSE}$;
Section~\ref{subsec:kaust} ablates CovFrob and a semivariogram
score).  The rank $K$ is chosen as the smallest value whose
top-$K$ eigenvalues of $\mathbf S=\tfrac{1}{T}\mathbf R^{\!\top}\mathbf R$
capture a fraction $\tau_{\mathrm{var}}=0.9$ of its total variance,
\begin{equation}\label{eq:rank-selection}
  K
  \;=\;
  \min\!\bigl\{k:\;\textstyle\sum_{i=1}^{k}\hat\lambda_{i}
               /\sum_{i=1}^{N}\hat\lambda_{i}\ge\tau_{\mathrm{var}}\bigr\};
\end{equation}
The basis dimension $K$ in~\eqref{eq:rank-selection} is only an
\emph{operational upper bound}.  The low-rankness of the
downstream covariance model is determined by the retained spectral
directions in
$\widehat\Sigma_{\mathbf r}=\widehat{\bm\Phi}\,\widehat\Lambda\,\widehat{\bm\Phi}^{\!\top}
+\widehat\sigma^{2}\mathbf I$~\eqref{eq:cov-est}, with weak
components further truncated by the
Wang--Huang~\citep{wang2017regularized} eigenvalue-thresholding
rule (Eq.~\eqref{eq:wang-rank}, Appendix~\ref{app:cov-correction}).
Thus covariance estimation and rank determination are tightly
coupled: the \emph{effective retained rank}---i.e.\ the number
of spectral components surviving the eigenvalue-thresholding
rule---is a property of the covariance estimator, not of $K$
alone.  The Stiefel constraint
$\bm\Phi^{\!\top}\bm\Phi=\mathbf I_{K}$ is essential here: without
orthogonality, the factorization
$\widehat{\bm\Phi}\,\widehat\Lambda\,\widehat{\bm\Phi}^{\!\top}$
is identified only up to an arbitrary rotation, and thresholding
its spectrum would not select stable directions.  Orthogonality
fixes the columns up to per-column sign; the roughness
($\lambda_{1}$) and column-sparsity ($\lambda_{2}$) penalties then
shape these identifiable directions to be smooth and locally
supported, so the effective retained rank measures how many
\emph{structured} spatial components survive thresholding---not
merely a nominal low-rank dimension.  Per-experiment
cumulative-variance curves and chosen-$K$ vs.\ effective-rank
numbers are in Appendix~\ref{app:rank-selection}.

We decouple $\bm\psi$ and $\bm\Phi$ in~\eqref{eq:loss-general} via a
consensus variable $\mathbf Z$ standing in for $\mathbf R(\bm\psi)$:
\begin{equation}\label{eq:loss-admm}
\min_{\bm\psi,\bm\Phi,\mathbf Z}\;
  \ell_{\mathrm{data}}(\mathbf Z\mathbf P^{\!\perp})
  +\lambda_{1}\operatorname{tr}(\bm\Phi^{\!\top}\bm\Omega\bm\Phi)
  +\lambda_{2}\|\bm\Phi\|_{1},
\quad
\text{s.t.}\;
  \bm\Phi^{\!\top}\bm\Phi=\mathbf I_{K},\;
  \mathbf Z=\mathbf R(\bm\psi).
\end{equation}
Per-sample scores are not explicit primal variables; at convergence
$\widehat{\mathbf A}=\widehat{\mathbf Z}\,\widehat{\bm\Phi}$
(regularization-path figures: Appendix~\ref{app:reg-path}).

\subsection{Optimization via mini-batch ADMM}\label{subsec:admm}

We solve~\eqref{eq:loss-admm} by a stochastic three-block
ADMM~\citep{boyd2011distributed} over primal variables
$(\bm\psi,\bm\Phi,\mathbf Z)$ and scaled dual $\mathbf U$ (for the
constraint $\mathbf Z=\mathbf R(\bm\psi)$).  At each step we draw a
mini-batch $\mathcal B\subset\{1,\dots,T\}$ of size $L$, maintain
$\mathbf Z,\mathbf U\!\in\!\mathbb R^{L\times N}$ on the batch, and
minimize the augmented Lagrangian
$\mathcal L_{\rho}=\ell_{\mathrm{data}}(\mathbf Z\mathbf P^{\!\perp})
+\lambda_{1}\operatorname{tr}(\bm\Phi^{\!\top}\bm\Omega\bm\Phi)
+\lambda_{2}\|\bm\Phi\|_{1}
+\tfrac{\rho}{2}\|\mathbf Z-\mathbf R(\bm\psi)+\mathbf U\|_{F}^{2}$
by alternating updates (T)$\to$(B)$\to$(Z)$\to$(U) as in
Algorithm~\ref{alg:admm}.

\begin{algorithm}[t]
\caption{Spatial Adapter: Stage~2 mini-batch ADMM.}
\label{alg:admm}
\begin{algorithmic}[1]
\Require frozen first-stage predictor $f_{\hat{\bm\theta}}$; warm-started
         trend $\bm\psi$; regularization weights
         $(\lambda_{1},\lambda_{2})$; penalty $\rho$; rank $K$;
         mini-batch size $L$; trend learning rate $\eta_{\psi}$
\State Initialize orthonormal $\bm\Phi\in\mathbb R^{N\times K}$,
       $\mathbf Z\gets\mathbf R(\bm\psi)$, $\mathbf U\gets\mathbf 0$
\Repeat
  \State Draw a mini-batch $\mathcal B\subset\{1,\dots,T\}$ of size $L$
  \State \textbf{(T) Trend step:}\hspace{0.4em}
         update $\bm\psi$ by one gradient step on
         $\tfrac{\rho}{2}\|\mathbf Z_{\mathcal B}-\mathbf R_{\mathcal B}(\bm\psi)+\mathbf U_{\mathcal B}\|_{F}^{2}$
  \State \textbf{(B) Basis step:}\hspace{0.4em}
         update $\bm\Phi$ from $\mathbf Z_{\mathcal B}$
         \Comment{closed-form eigendecomposition (Gaussian);
                  surrogate update (Bernoulli)}
  \State \textbf{(Z) Consensus step:}\hspace{0.4em}
         update $\mathbf Z_{\mathcal B}$
         \Comment{closed form~\eqref{eq:z-closed-form} (Gaussian);
                  one prox-gradient step (Bernoulli)}
  \State \textbf{(U) Dual step:}\hspace{0.4em}
         $\mathbf U_{\mathcal B}\gets\mathbf U_{\mathcal B}+\mathbf Z_{\mathcal B}-\mathbf R_{\mathcal B}(\bm\psi)$
  \State Optionally freeze $\bm\Phi$ after a fixed number of
         iterations
\Until{validation loss stops improving}
\State \Return $\hat{\bm\psi},\,\widehat{\bm\Phi}$, and in-sample
       scores $\widehat{\mathbf A}=\widehat{\mathbf Z}\,\widehat{\bm\Phi}$
\end{algorithmic}
\end{algorithm}

\paragraph{Interpretation.}
Each sweep is a computational analogue of the classical spatial
decomposition into trend, low-rank spatial effect, and noise:
(T) refines the trend, (B) updates the basis, (Z) in the
Gaussian/identity case retains the in-basis residual and shrinks
the off-basis component (Proposition~\ref{prop:z-update}),
and (U) enforces consistency.  Step (B) is a top-$K$
eigendecomposition of a symmetrized penalized Gram matrix (with an
IRL$_{1}$~\citep{candes2008enhancing} inner loop when
$\lambda_{2}>0$); step (Z) is closed-form in the Gaussian case
and one proximal-gradient step in the Bernoulli case; after a
freeze horizon $N_{\text{freeze}}$, (B) is skipped.  Full
derivations are in Appendix~\ref{app:admm-steps}.

\begin{remark}[Basis update in the Bernoulli case]
\label{rem:bce-basis-step}
The $\bm\Phi$-subproblem under the Bernoulli data term admits no
closed form.  We apply the standard $\tfrac{1}{8}$-quadratic
surrogate for the log-sigmoid~\citep{bohning1992multinomial,jaakkola2000bayesian}
and a variance-only target $\mathbf C_{\mathcal B}$ (dropping the
label-dependent correction from the full Taylor expansion).
The full surrogate derivation, entrywise remainder bound, and an
empirical ablation against two alternative targets are given in
Appendix~\ref{app:bce-ablation}.
\end{remark}

In the Gaussian/identity case, the $\mathbf Z$-subproblem is
$\rho$-strongly convex and admits the closed form
\begin{equation}\label{eq:z-closed-form}
\mathbf Z_{\mathcal B}^{\star}
=
\mathbf{Res}_{\mathcal B}\,\mathbf P
+\frac{\rho}{\rho+2}\,\mathbf{Res}_{\mathcal B}\,\mathbf P^{\!\perp},
\qquad
\mathbf{Res}_{\mathcal B}\coloneqq\mathbf R_{\mathcal B}(\bm\psi)-\mathbf U_{\mathcal B},
\end{equation}
which retains the in-basis residual and shrinks its off-basis
component by $\rho/(\rho+2)\!\in\!(0,1)$ (Proposition~\ref{prop:z-update},
Appendix~\ref{app:proofs}; $\rho$ chosen so that
$\rho/(\rho+2)\!\approx\!0.5$).  We make no global convergence
claims for this nonconvex stochastic three-block ADMM;
stabilization in practice comes from the Stage-1 warm start, the
strongly convex Gaussian $\mathbf Z$-subproblem, and infrequent,
then frozen, basis updates.  The method targets moderate-$N$
discrete spatial grids: at $N\!\le\!2000$ the basis step
($O(LN^{2}{+}N^{3})$) does not bottleneck under our cadence/freeze
schedule, while larger $N$ would require implicit-operator or
randomized-SVD variants in place of explicit $N\!\times\!N$ Gram
materialisation (Appendix~\ref{app:periter-cost}).

\subsection{Spatial covariance estimation}\label{subsec:cov-est}

The basis $\widehat{\bm\Phi}$ from Stage~2 immediately yields a
closed-form estimate of the spatial covariance of the residual
field, unlocking two capabilities beyond point prediction:
spatial interpolation at unobserved locations (underlying the
Weather2K spatial-holdout experiment), and plug-in predictive
intervals (illustrated diagnostically in
Appendix~\ref{app:bce-ablation}).  Under the
working-independence model
$\bm\alpha_{j}\sim(\mathbf 0,\Lambda)$,
$\bm\epsilon_{j}\sim(\mathbf 0,\sigma^{2}\mathbf I_{N})$ of
Section~\ref{subsec:model}, the residual
$\mathbf r_{j}=\mathbf Y_{j,:}-(\mathbf F_{\hat{\bm\theta}})_{j,:}-(\mathbf M_{\hat{\bm\psi}})_{j,:}=\bm\Phi\bm\alpha_{j}+\bm\epsilon_{j}$
has marginal spatial covariance
\begin{equation}\label{eq:cov-decomp}
  \Sigma_{\mathbf r}
  \;=\;
  \bm\Phi\,\Lambda\,\bm\Phi^{\!\top}
  +\sigma^{2}\mathbf I_{N}.
\end{equation}
Following Proposition~1
of~\citet{wang2017regularized},\footnote{We use the closed-form
moment estimator of~\citet{wang2017regularized}, with a small
clarification of the rank-selection rule (their Eq.~(12)) that
handles the corner case where the defining set is empty; details
in Appendix~\ref{app:cov-correction}.}
the method-of-moments plug-in estimator is
\begin{equation}\label{eq:cov-est}
\widehat\Sigma_{\mathbf r}
\;=\;
\widehat{\bm\Phi}\,\widehat\Lambda\,\widehat{\bm\Phi}^{\!\top}
+\widehat\sigma^{2}\,\mathbf I_{N},
\end{equation}
with $(\widehat\Lambda,\widehat\sigma^{2})$ in closed form from
the eigenvalues of $\widehat{\bm\Phi}^{\!\top}\mathbf S\widehat{\bm\Phi}$
($\mathbf S\coloneqq\tfrac{1}{T_{\mathrm{train}}}\mathbf R^{\!\top}\mathbf R$);
costs a single $K\!\times\!K$ eigendecomposition and is unbiased
for $\Sigma_{\mathbf r}$ when $\widehat{\bm\Phi}=\bm\Phi$.

The estimated basis $\widehat{\bm\Phi}$ is extended to query
locations $\mathbf s^{*}$ outside the Stage-2 grid via a
deterministic thin-plate-spline interpolator (no extra training),
and~\eqref{eq:cov-est} drives two plug-in predictive machineries
derived in Appendix~\ref{app:conditional-variance}:
a fixed-rank kriging conditional predictive
law~\citep{cressie2008fixed} that gives the $100(1-\alpha)\%$
regression interval
$\hat\eta(\mathbf s^{*},j)\pm z_{\alpha/2}\sqrt{\hat v(\mathbf s^{*},j)}$~\eqref{eq:cgi}
(Proposition~\ref{prop:gaussian-predictive}), and its
logistic-normal pushforward through $\sigma$ that gives the binary
probability-scale interval~\eqref{eq:lnui}.  Both are asymptotic
plug-ins; the binary interval construction is illustrated
diagnostically in Appendix~\ref{app:bce-ablation} (variance-only
surrogate, fixed $K{=}16$), with regression-side calibration on
Weather2K deferred to future work (bias sources: Limitations,
Section~\ref{sec:discussion}).  Stage~2 recovers regularized
spatial PCA~\citep{wang2017regularized} as the $\mu_{\bm\psi}=0$,
$f_{\bm\theta}\!\equiv\!0$, $g=\mathrm{identity}$, $\lambda_{2}=0$,
full-batch limit.

\section{Numerical Study}\label{sec:numerical}

This section validates the adapter under controlled conditions.
Together with the real-world deployments of
Section~\ref{sec:applications}, the four experiments in this
paper span four orthogonal axes so that each answers a distinct
research question rather than repeating the same test in a new
dataset:
\begin{itemize}[leftmargin=1.3em,itemsep=1pt]
  \item \textbf{Task type}: continuous regression
        (\S\ref{subsec:kaust}, \S\ref{subsec:weather2k}) vs.\
        binary image classification (\S\ref{subsec:image}).
  \item \textbf{Backbone strength}: deliberately under-trained
        STDK (\S\ref{subsec:kaust}), well-trained STDK
        (\S\ref{subsec:weather2k}), and large pretrained vision
        backbones (\S\ref{subsec:image}).
  \item \textbf{Spatial index $\mathbf s$}: geographic station
        network (\S\ref{subsec:kaust}, \S\ref{subsec:weather2k})
        vs.\ within-image patch grid with genuine intra-image
        spatial dependence (\S\ref{subsec:image}).
  \item \textbf{Evaluation angle}: validation-criterion ablation
        (\S\ref{subsec:kaust}); kriging-based spatial prediction at
        $20\%$ held-out stations
        (\S\ref{subsec:weather2k}, with a complementary data-split
        ablation in Appendix~\ref{app:weather2k-split}); portability
        to binary patch-level labels and the BCE variant
        (\S\ref{subsec:image}).
\end{itemize}
A compact synthetic mechanism validation (§\ref{subsec:synth-main})
precedes the KAUST ablation; the full DGP and extended metrics
are in Appendix~\ref{subsec:synth}.  The two real-world applications
follow in Section~\ref{sec:applications}.

\subsection{Evaluation metrics}\label{subsec:metrics}

The evaluation protocol is shared by Sections~\ref{sec:numerical}
and~\ref{sec:applications}.  All metrics report mean and standard
error across seeds (count stated in each table caption).  We use
the following metrics, with full definitions deferred to
Appendix~\ref{app:metrics}:
\begin{itemize}[leftmargin=1.3em,itemsep=1pt]
  \item \textbf{Pointwise metrics.}  RMSE/MAE/$R^{2}$ (regression,
        Eq.~\eqref{eq:pointwise_def}), Acc/AUC/$F_{1}$ (binary).
        Under time-split and within-image protocols $\mathbf Y$ is
        observed on the eval set, so adapter rows read as
        \emph{reconstruction} diagnostics, not predictive
        classification/forecasting; genuine out-of-sample prediction
        is Weather2K held-out (Section~\ref{subsec:weather2k}).
  \item \textbf{Spatial-dependence fidelity.}  Relative Frobenius
        covariance error \textbf{CovFrob}
        (Eq.~\eqref{eq:covfrob_def}) and a semivariogram score
        $\mathbf{SV_{\text{score}}}$
        (Eq.~\eqref{eq:semivar-score}); reported in regression
        settings where the sample covariance is well-conditioned.
  \item \textbf{Interval diagnostics (appendix).}  The binary
        logistic-normal interval is illustrated with MPIW (on the
        probability scale) and a location-level $\mathbf{CP}$
        against the cross-image empirical occurrence rate
        $\bar p(\mathbf s)$ in Appendix~\ref{app:bce-ablation};
        type-match rationale and full metric definitions in
        Appendix~\ref{app:metrics}.  These metrics are not part of
        the main-text empirical claims.
\end{itemize}

\subsection{Synthetic mechanism validation}\label{subsec:synth-main}

On a rank-1 synthetic DGP with known $\phi(\mathbf s)$ and
$\Sigma_{\mathrm{true}}$ (full DGP in
Appendix~\ref{subsec:synth}), Figure~\ref{fig:synth} contrasts
what a frozen OLS first stage vs.\ the adapter recover.  OLS is
predictively adequate (RMSE within $0.2\%$ of the regularized
adapter; Table~\ref{tab:temporal_results}) but structurally
opaque: its residual sample covariance buries the Gaussian-bump
structure of $\Sigma_{\mathrm{true}}$ under full-rank sampling
noise.  The regularized adapter panel recovers that structure,
and the path plot shows basis alignment and CovFrob concentrating
on a common optimum under $\lambda_{1}\!=\!\lambda_{2}\!=\!\lambda$
--- explanatory power added without sacrificing pointwise
reconstruction quality.

\begin{figure}[!ht]
  \centering
  \begin{minipage}[c]{0.52\linewidth}
    \includegraphics[width=\linewidth]{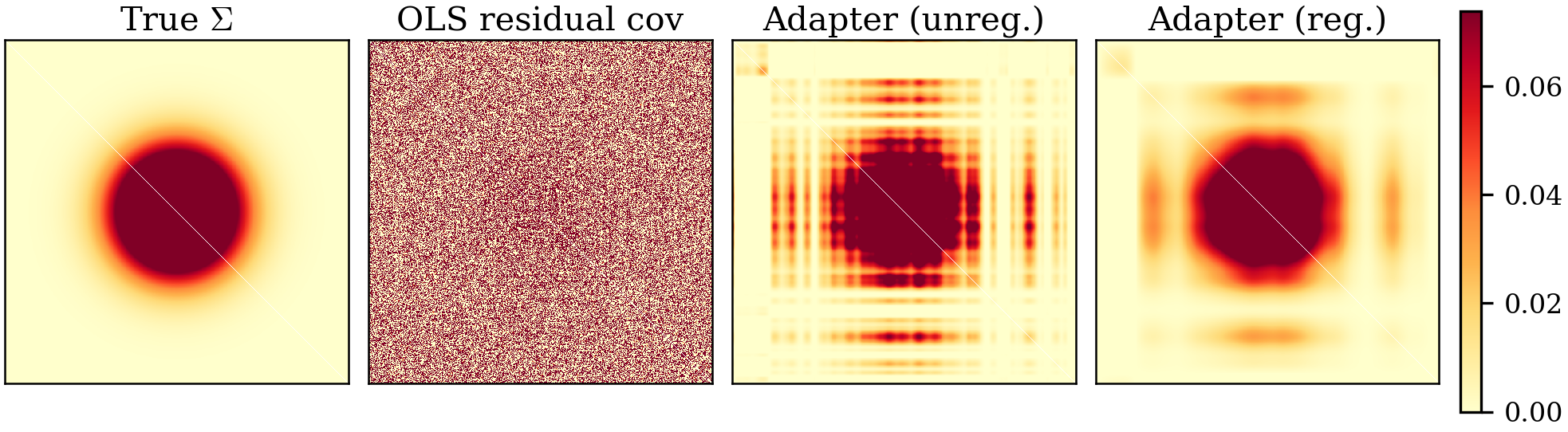}
  \end{minipage}\hfill
  \begin{minipage}[c]{0.46\linewidth}
    \includegraphics[width=\linewidth]{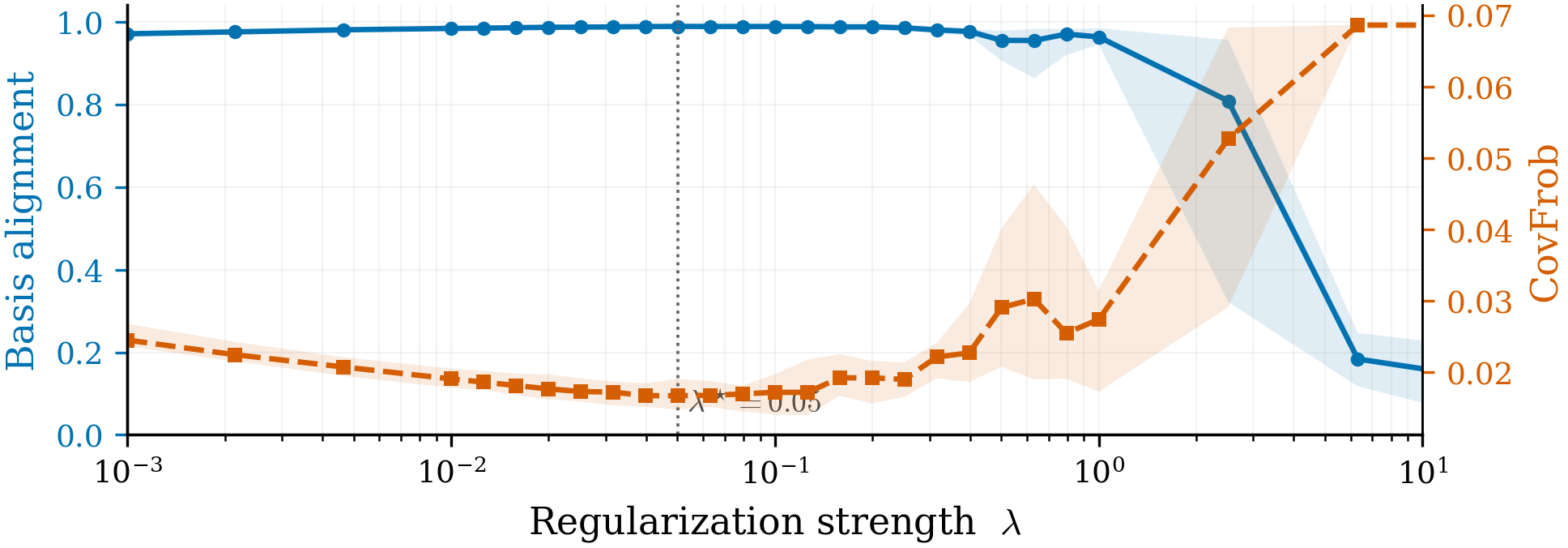}
  \end{minipage}
  \caption{\textbf{Synthetic mechanism validation (rank-1 DGP,
           representative seed).}
           \textbf{Left:} $\Sigma_{\mathrm{true}}$ / OLS residual
           cov / adapter at $\lambda{=}10^{-3}$ (``unreg.'') /
           adapter at tuned $\lambda{=}0.05$ (``reg.'');
           shared off-diagonal colormap, diagonal masked.
           CovFrob vs.\ $\Sigma_{\mathrm{true}}$: OLS $0.847$,
           unreg.\ $0.025$, reg.\ $0.018$.
           \textbf{Right:} regularization path
           ($\lambda_{1}=\lambda_{2}=\lambda$) for basis alignment
           (blue) and CovFrob (orange), median and IQR over 30
           seeds; dotted line marks $\lambda^{\star}$.  Multi-seed
           numerics in Table~\ref{tab:temporal_results}
           (Appendix~\ref{subsec:synth}).}
  \label{fig:synth}
\end{figure}

\subsection{KAUST Competition Data --- Ablation on Validation Criterion}
\label{subsec:kaust}

We pool the KAUST 2b\_8 train and test releases into a single
grid of $N_{\text{full}}{=}10^{4}$ locations over $T_{\text{full}}{=}100$
time steps, retain $10\%$ of stations per seed ($N{=}1000$), and
split time $10/10/80$ into train/val/test.  The backbone is
STDK~\citep{lin2023enhancements} trained on the $10\%$ train window;
the adapter uses $K{=}46$ with $50$ Optuna trials over
$(\lambda_{1},\lambda_{2})\!\in\![10^{-4},10^{8}]^{2}$.  We compare
three validation criteria---RMSE, CovFrob~\eqref{eq:covfrob_def},
and SV$_{\mathrm{score}}$~\eqref{eq:semivar-score}---and for each
refit the adapter at the selected
$(\hat\lambda_{1},\hat\lambda_{2})$ and evaluate all three metrics
on the test set (setup details in Appendix~\ref{app:kaust-setup}).

\begin{table}[htbp]
  \centering
  \caption{Ablation on validation criterion (KAUST data, 10/10/80
           split).  All metrics are reconstruction-based (the adapter
           observes $\mathbf Y$ at all stations); evaluated on test
           set; mean (std.\ error) across 30 seeds.}
  \label{tab:ablation_val}
  \begin{tabular}{@{}lccc@{}}
    \toprule
    \textbf{Val criterion}
      & \textbf{Test RMSE}
      & \textbf{Test Cov\(_{\text{Frob}}\)}
      & \textbf{Test SV\(_{\text{score}}\)} \\
    \midrule
    Backbone only (no adapter)
      & 1.043\,(0.004)
      & 0.995\,(0.000)
      & 0.466\,(0.007) \\
    RMSE
      & 0.465\,(0.002)
      & 0.382\,(0.001)
      & 0.038\,(0.001) \\
    CovFrob
      & 0.465\,(0.002)
      & 0.382\,(0.001)
      & 0.037\,(0.001) \\
    SV\(_{\text{score}}\)
      & 0.465\,(0.002)
      & 0.382\,(0.001)
      & 0.037\,(0.001) \\
    \bottomrule
  \end{tabular}
\end{table}

This ablation addresses two questions.
First, \emph{robustness}: when training data are scarce, the adapter
must separate genuine spatial signal from trend-fitting artifacts.
Second, \emph{trade-off}: RMSE-selected regularization may favor
pointwise reconstruction at the expense of covariance fidelity,
whereas CovFrob or semivariogram selection may sacrifice marginal
pointwise accuracy to better recover spatial dependence.
Notably, all three validation criteria converge on near-identical
test metrics (test RMSE $\approx 0.465$, Cov$_{\mathrm{Frob}}\approx 0.382$,
SV$_{\text{score}}\approx 0.037$ within one SE).  This is
consistent with the deliberately under-trained backbone: when the
first-stage trend is weak, a large fraction of the spatial signal
remains in the residual, so that pointwise reconstruction and
covariance recovery are tightly coupled---improving one
automatically improves the other.  In better-trained regimes
(e.g.\ the 80/10/10 Weather2K split of
Section~\ref{subsec:weather2k}), this coupling is expected to
weaken, and a covariance-aware validation criterion may become more
informative.

\section{Applications}\label{sec:applications}

Two real-world settings span the two interpretations of the spatial
index $\mathbf s$: geographic weather stations with continuous
response (Weather2K, \S\ref{subsec:weather2k}) and a within-image
patch grid with binary response (GWHD Wheat Head,
\S\ref{subsec:image}).

\subsection{Weather2K --- Held-Out Spatial Prediction}\label{subsec:weather2k}

We use Weather2K ($\sim\!2000$ Chinese weather stations), subsampled to
$N{=}187$ stations and $T{=}1000$ time steps.  An
STDK backbone~\citep{lin2023enhancements} is trained on the
training split and the adapter is applied with $K{=}13$.
STDK is itself a classical spatiotemporal backbone with built-in
spatial structure; applying the adapter on top tests whether
residual spatial patterns persist even when the first stage is
already spatially aware---i.e.\ whether the adapter is a
\emph{complementary} layer rather than a remedy for weak
backbones.
$(\lambda_{1},\lambda_{2})$ are selected by Optuna over 50 trials
on validation RMSE.  We compare STDK alone, adapter (unreg.,
$\lambda_{1}=\lambda_{2}=0$), and adapter (reg.).  A complementary
data-split ablation is in Appendix~\ref{app:weather2k-split}.

\paragraph{Spatial prediction at held-out stations.}
We hold out $20\%$ of stations entirely from training (spatially
masked in both the STDK backbone and the adapter) and apply the
conditional-kriging predictor of Section~\ref{subsec:cov-est}:
$\widehat{\bm\phi}(\mathbf s^{*})$ is obtained by thin-plate-spline
interpolation of the observed-station basis rows, and
$\widehat{\bm\alpha}_{j}(\mathcal O_{j})$ from
Eq.~\eqref{eq:point-pred} (Appendix~\ref{app:conditional-variance})
uses the residuals at training stations at test time $j$.  We
report held-out RMSE; the closed-form predictive
variance~\eqref{eq:kriging-se} (same appendix) is not evaluated as a PI diagnostic
here (conformal calibration: Section~\ref{sec:discussion}).
This is a \emph{conditional spatial interpolation} setting
(adapter uses same-time observed-station residuals to predict
held-out stations; the STDK baseline extrapolates without such
conditioning).  The regularized adapter reduces RMSE to
$2.664~(0.015)$ vs.\ unreg.\ $2.705~(0.017)$ and STDK alone
$7.667~(0.034)$; full numbers in
Table~\ref{tab:weather2k_holdout}
(Appendix~\ref{app:weather2k-held-out-extra}).

The reduction evidences that the learned basis $\widehat{\bm\Phi}$
transports to unseen stations through thin-plate-spline
interpolation (Fig.~\ref{fig:weather2k_holdout_main}).  The three
panels look visually similar at the continental scale --- STDK
already captures the dominant temperature trend, so the adapter's
gain concentrates on local residual structure; the station-split
map and per-station RMSE in
Appendix~\ref{app:weather2k-held-out-extra} resolve that
local-scale improvement.  The experiment is
scoped to held-out RMSE to isolate kriging transportability;
regression-side interval calibration uses the same covariance
machinery and is left to future work
(Section~\ref{sec:discussion}).

\begin{figure*}[!tbp]
  \centering
  \includegraphics[width=\linewidth]{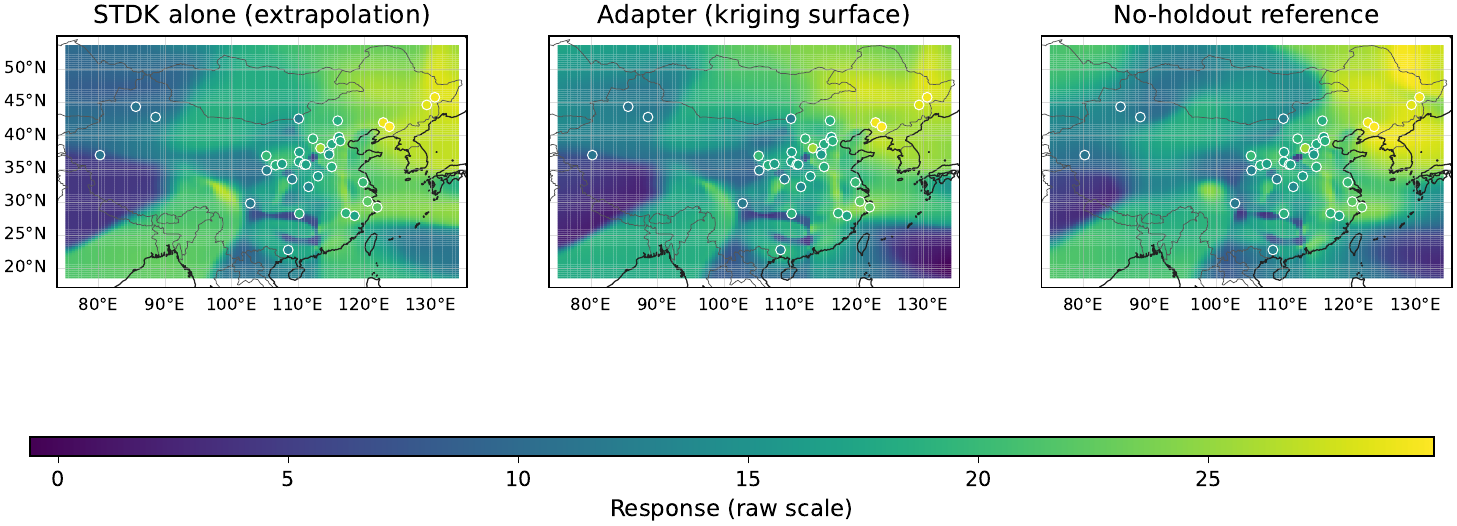}
  \caption{Held-out Weather2K kriging at one test time.
           \textbf{Left:} STDK extrapolation baseline.
           \textbf{Middle:} adapter kriging surface --- STDK plus
           the learned low-rank correction
           $\widehat{\bm\Phi}(\mathbf s)^{\!\top}\widehat{\bm\alpha}_{j}$,
           interpolated to held-out locations via thin-plate splines.
           \textbf{Right:} no-holdout reference --- the same pipeline
           refit with the $20\%$ held-out stations included in
           training, so the basis is read off in-sample (no TPS
           interpolation).  All three panels share the response colour
           scale and the $20\%$ held-out stations are overlaid on
           every panel.  The adapter kriging surface tracks the
           no-holdout reference closely at the continental scale; the multi-seed
           Adapter-vs-STDK RMSE gap is given in
           Table~\ref{tab:weather2k_holdout}
           (Appendix~\ref{app:weather2k-held-out-extra}).  The signed
           correction surface is broken out in
           Fig.~\ref{fig:weather2k_correction}.}
  \label{fig:weather2k_holdout_main}
\end{figure*}

\begin{figure}[!ht]
  \centering
  \includegraphics[width=\linewidth]{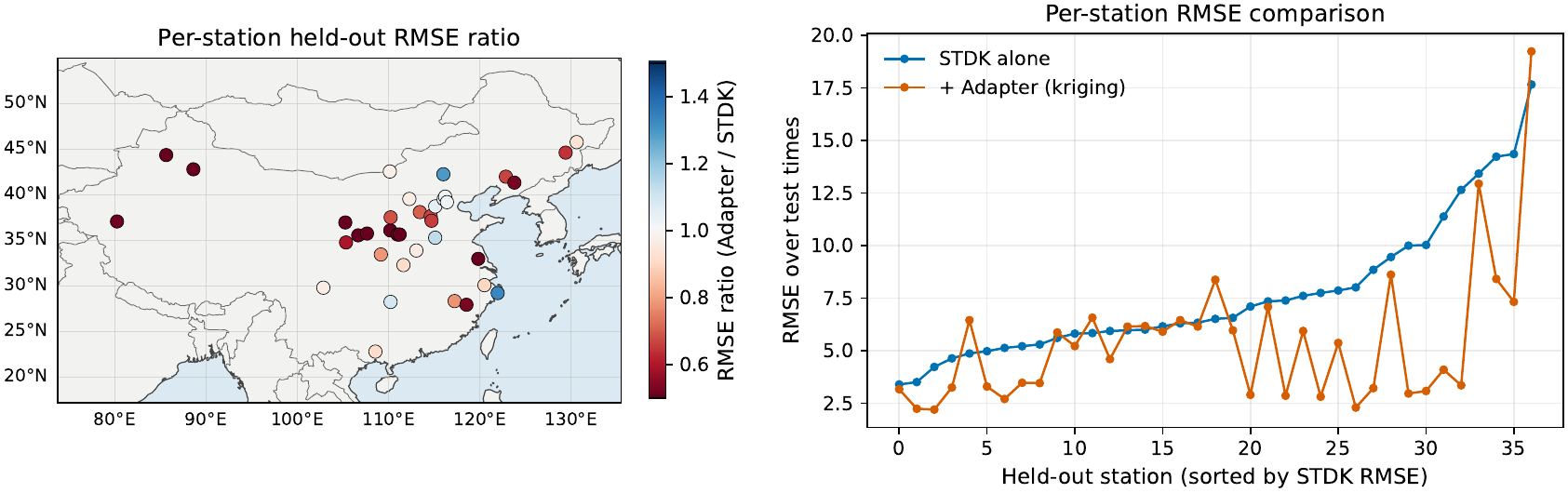}
  \caption{Per-station held-out RMSE on Weather2K.
           \textbf{Left:} RMSE ratio (adapter\,/\,STDK) overlaid
           on the held-out-station map; cool colours indicate the
           adapter outperforms the backbone.  \textbf{Right:}
           paired RMSE trajectories, stations sorted by STDK RMSE
           ---~the gap between the two curves shows consistent
           per-station improvement across the $20\%$ held-out
           stations, not an average driven by a few outliers.}
  \label{fig:weather2k_rmse}
\end{figure}

\subsection{Wheat Head: Basis Transferability on Patch Grids}\label{subsec:image}

We apply the adapter to within-image binary patch grids from the
Global Wheat Head Detection
dataset~\citep{david2020global,david2021global} as a
\emph{basis-transferability diagnostic}: at test time the adapter
projects each test image's observed patch labels onto the
train-learned basis $\widehat{\bm\Phi}$, so reported metrics
measure how well $\widehat{\bm\Phi}$ spans the test-time residual
structure rather than predictive classification (a label-efficient
partial-observation evaluation is future work).  GWHD supplies
$6{,}515$ images ($1024\!\times\!1024$) with $300{,}000\!+\!$
wheat-head annotations from $12$ countries.

\paragraph{Patch-level binary task.}
We convert the detection task into a patch-level binary
classification target used as the residual carrier.  Each image
is divided into a regular grid of non-overlapping patches (e.g.\
$64\!\times\!64$ pixels), yielding $N=16\!\times\!16=256$ patches
per image; a patch is labelled \emph{positive} if at least one
ground-truth bounding box overlaps it, \emph{negative} otherwise.
Images are split $70/15/15$ into train/val/test, with the same
split shared across all backbones; the adapter uses rank
$K\!\in\!\{153,155\}$ (per backbone, via the
cumulative-variance rule; see Appendix~\ref{app:rank-selection}).
All reported means and standard errors are across $30$ random
seeds of the adapter's Stage-2 training.

The patch grid maps onto the spatial GLM with
$\mathbf s=(h,w)\!\in\!\{1,\dots,16\}^{2}$, each image
indexing one sample $j$, and $Y(j,\mathbf s)\!\in\!\{0,1\}$ the
wheat-head presence.  We evaluate four frozen backbones---ResNet-152,
ConvNeXt-T, ViT-B/16, and SAM (ViT-H)~\citep{he2016deep,liu2022convnext,dosovitskiy2021vit,kirillov2023sam}---each
pretrained (ImageNet or SAM checkpoint) and coupled to a lightweight
classification head trained in Stage~1.  In Stage~2 the adapter is
applied with $g=\sigma$ and BCE loss, using the surrogate
updates of Remark~\ref{rem:bce-basis-step}; at convergence the
per-image reconstructed probability is
$\hat p(\mathbf s)=\sigma\bigl(f_{\hat{\bm\theta}}(\mathbf s)+\sum_{k}\alpha_{k,j}\phi_{k}(\mathbf s)\bigr)$,
with $\alpha_{k,j}$ recovered from the observed test-image patch
labels.
Full framework mapping, backbone details, the two-stage
recipe, and a parameter-efficiency breakdown showing that the
adapter adds $0.009$--$0.32\%$ of the frozen first-stage
parameter count
(Table~\ref{tab:wheat-param-efficiency})
are in Appendix~\ref{app:wheat-setup}.  Three comparison
variants are reported for each backbone:
\emph{backbone only} (no adapter),
\emph{+ adapter (unreg.)} ($\lambda_1\!=\!\lambda_2\!=\!0$), and
\emph{+ adapter (reg.)} with tuned $(\hat\lambda_1,\hat\lambda_2)$.

\begin{figure*}[!tbp]
  \centering
  \includegraphics[width=0.82\linewidth]{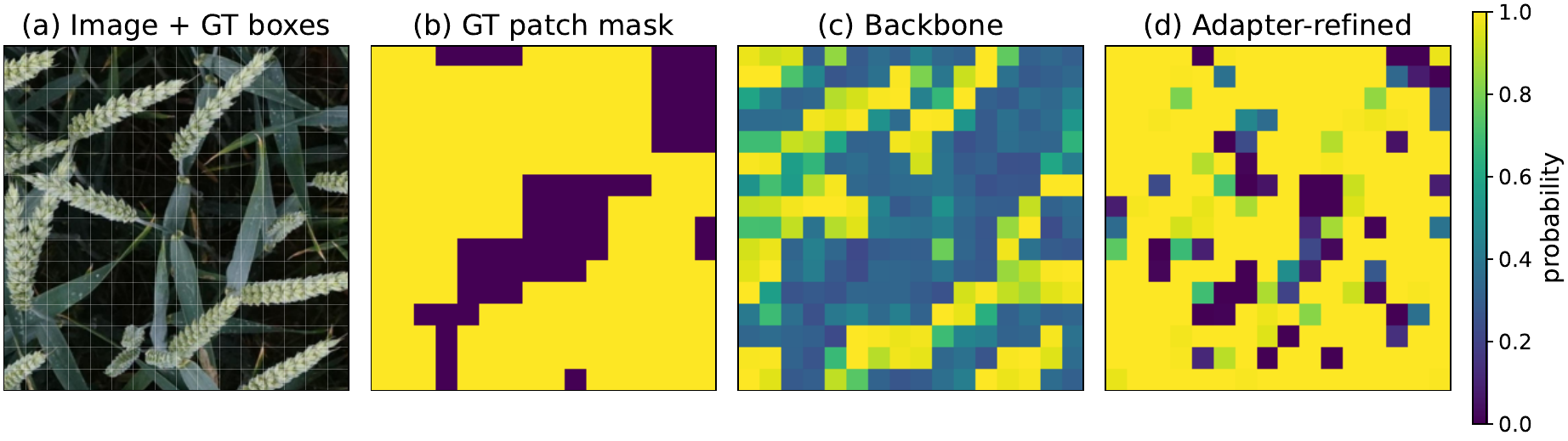}
  \caption{GWHD within-image refinement (ResNet-152, shared scale
           $[0,1]$): (a)~image + GT boxes, (b)~GT patch mask,
           (c)~backbone probability, (d)~adapter-refined
           (spatially sharpened, aligns with~(b)).}
  \label{fig:wheat_head_refinement}
\end{figure*}

\begin{table}[htbp]
  \centering
  \caption{\textbf{Basis transferability diagnostic on GWHD
           (reconstruction, not prediction).}  Adapter rows project
           each test image's observed patch labels onto the
           train-learned basis $\Phi$; high metrics measure how
           well $\Phi$ spans test-image residual structure, not
           predictive classification.  Backbone-only rows are the
           no-adapter prediction reference.  Mean (std.\ error)
           over $30$ seeds.}
  \label{tab:image}
  \begin{tabular}{@{}lccc@{}}
    \toprule
    \textbf{Backbone / model}
      & \textbf{Acc.\,(\%)}
      & \textbf{AUC}
      & \textbf{F1} \\
    \midrule
    \multicolumn{4}{@{}l}{\emph{Backbone-only prediction reference (no adapter):}} \\
    ResNet-152           & 78.134\,(0.082) & 0.86735\,(0.00033) & 0.800\,(0.002) \\
    ConvNeXt-T           & 81.089\,(0.038) & 0.89166\,(0.00032) & 0.825\,(0.001) \\
    ViT-B/16             & 82.623\,(0.037) & 0.90408\,(0.00028) & 0.840\,(0.001) \\
    SAM (ViT-H)          & 78.866\,(0.057) & 0.87533\,(0.00030) & 0.811\,(0.001) \\
    \midrule
    \multicolumn{4}{@{}l}{\emph{Adapter reconstruction diagnostic (observed test labels $\to$ projected onto $\Phi$):}} \\
    ResNet-152 + Adapter (unreg.)  & 96.164\,(0.046) & 0.99367\,(0.00014) & 0.966\,(0.000) \\
    ResNet-152 + Adapter (reg.)    & \textbf{99.472}\,(0.010) & \textbf{0.99986}\,(0.00001) & \textbf{0.995}\,(0.000) \\
    ConvNeXt-T + Adapter (unreg.)  & 96.865\,(0.024) & 0.99571\,(0.00005) & 0.973\,(0.000) \\
    ConvNeXt-T + Adapter (reg.)    & \textbf{99.481}\,(0.005) & \textbf{0.99987}\,(0.00000) & \textbf{0.995}\,(0.000) \\
    ViT-B/16 + Adapter (unreg.)    & 97.337\,(0.026) & 0.99682\,(0.00005) & 0.977\,(0.000) \\
    ViT-B/16 + Adapter (reg.)      & \textbf{99.584}\,(0.005) & \textbf{0.99991}\,(0.00000) & \textbf{0.996}\,(0.000) \\
    SAM (ViT-H) + Adapter (unreg.) & 95.762\,(0.042) & 0.99231\,(0.00014) & 0.963\,(0.000) \\
    SAM (ViT-H) + Adapter (reg.)   & \textbf{99.684}\,(0.003) & \textbf{0.99994}\,(0.00000) & \textbf{0.997}\,(0.000) \\
    \bottomrule
  \end{tabular}
\end{table}

\paragraph{Logistic-normal intervals (diagnostic).}
The same covariance machinery supports a link-scale
logistic-normal interval on each patch's probability;
because~\eqref{eq:lnui} quantifies uncertainty about
$p(\mathbf s)$ rather than the binary outcome
$y\!\in\!\{0,1\}$, calibration is reported as a location-level
statistic rather than a per-instance coverage.  We illustrate this
construction diagnostically (variance-only Bernoulli surrogate,
fixed $K{=}16$) in Appendix~\ref{app:bce-ablation} alongside the
full BCE-surrogate ablation; it is not part of the main empirical
claims of this section.

Three takeaways: (i) $\mathbf s$ generalises from geographic
stations (Weather2K) to within-image patch grids (Wheat Head)
with no method-side change; (ii) the train-learned basis
transfers across test images on all four frozen vision backbones
(near-perfect residual reconstruction); (iii)
$\widehat\Sigma_{\mathbf r}$ underlies both genuine
spatial-holdout \emph{prediction} (Weather2K) and
\emph{reconstruction} of within-image residual structure (Wheat
Head); logistic-normal intervals are in
Appendix~\ref{app:bce-ablation}.

\section{Discussion}\label{sec:discussion}

The Spatial Adapter equips a frozen first-stage predictor with a
structured spatial decomposition of its residual field and a
closed-form estimator of the induced spatial covariance; the added
spatial representation uses fewer than $K(N{+}T)$ parameters, in
addition to a compact residual-trend network.  Covariance estimation and rank determination are tightly coupled:
$K$ is only an optimization-side upper bound, and the
\emph{effective retained rank} is a property of the covariance
estimator, selected data-adaptively by the eigenvalue-thresholding
rule (Eq.~\eqref{eq:wang-rank}, Appendix~\ref{app:cov-correction}).
The Stiefel constraint on $\bm\Phi$ makes these surviving directions
identifiable up to per-column sign; the $(\lambda_{1},\lambda_{2})$
penalties then render them smooth and locally supported, so
thresholding selects among \emph{structured} spatial directions
rather than arbitrary rotations of a low-rank factorization.
\emph{Limitations:} the
covariance~\eqref{eq:cov-est} is unbiased only at the population
basis (plug-in bias
$\|\widehat{\bm\Phi}-\bm\Phi\|_{F}^{2}$ not theoretically
quantified); intervals~\eqref{eq:cgi}--\eqref{eq:lnui} are
asymptotic, with binary-case construction illustrated
diagnostically (Appendix~\ref{app:bce-ablation}, MPIW and
location-level CP under the variance-only surrogate at fixed
$K{=}16$) and regression-side PI calibration on Weather2K deferred
to future work; and $\bm\Omega$ assumes a Euclidean
$\mathcal S$ (no direct extension to graphs or Riemannian
manifolds).  \emph{Future work} (Appendix~\ref{app:extensions}):
multiclass/multilabel responses, simultaneous prediction bands,
state-space forecasting of $\{\bm\alpha_{j}\}$, split-conformal
finite-sample
calibration~\citep{vovk2005algorithmic}, and joint first-stage
fine-tuning.

\begin{ack}
Acknowledgments will be added in the camera-ready version.
\end{ack}

\paragraph{Code and data availability.}
Anonymous repository mirror for double-blind review:
\url{https://anonymous.4open.science/r/spatial-adapter-C2C3/}.

\bibliography{references}


\appendix

\section{Extensions and expanded limitations}
\label{app:extensions}

This appendix expands the compressed limitations and future-work
paragraphs in Section~\ref{sec:discussion}.

\paragraph{Extension to multiclass and multilabel responses.}
Because the Stage~2 objective~\eqref{eq:loss-general} is stated for
a generic link function $g$ and task loss $\ell_{\mathrm{task}}$,
the same ADMM skeleton accommodates multiclass and multilabel
outputs---under the same caveats as the binary case of
Remark~\ref{rem:bce-basis-step}, namely that the $\bm\Phi$- and
$\mathbf Z$-updates become surrogate-based approximations rather
than closed-form solutions of the exact non-Gaussian subproblems.  For an
$M$-class response we replace the scalar latent field
$\bm\eta_{j,i}\in\mathbb R$ with a vector
$\bm\eta_{j,i}\in\mathbb R^{M}$ and use the softmax link $g$
together with the categorical cross-entropy loss; the low-rank
spatial representation is then the per-class field
$\sum_{k=1}^{K}\bm\alpha_{k,j}\,\phi_{k}(\mathbf s_{i})$ with
$\bm\alpha_{k,j}\in\mathbb R^{M}$.  The basis $\bm\Phi$ is shared
across classes so that the parameter count grows only as
$KM(N{+}T)$, and the closed-form covariance estimator of
Section~\ref{subsec:cov-est}---derived under the
Gaussian/identity case---is applied class-wise on the
natural-parameter scale as an approximation, yielding per-class
link-scale intervals analogous to those used in
Section~\ref{subsec:image}.  A sigmoid link with binary
cross-entropy applied entry-wise gives the multilabel variant; in
both cases the $\mathbf Z$-step uses proximal-gradient updates and
the $\bm\Phi$-step reuses the squared-loss spatial-PCA surrogate,
with an exact derivation left to future work.  The
$\tfrac{1}{8}$-quadratic surrogate of
Remark~\ref{rem:bce-basis-step} extends naturally to the
$M$-class softmax case via Böhning's multinomial curvature
bound~\citep{bohning1992multinomial}, which controls the Hessian
of the log-partition uniformly by
$\tfrac{1}{2}(\mathbf I_{M}-\tfrac{1}{M}\mathbf 1\mathbf 1^{\!\top})$;
we do not evaluate this multiclass variant in the present paper
and flag it purely as the analytical justification for the same
surrogate-based basis update.

\paragraph{Simultaneous spatial prediction bands.}
Because the plug-in conditional covariance
$\widehat\Lambda_{\mathrm{cond}}(\mathcal O_{j})$ is available in
closed form, the per-location interval~\eqref{eq:cgi} extends
naturally to a simultaneous band over any finite collection of
query points $\{\mathbf s^{*}_{1},\dots,\mathbf s^{*}_{m}\}$ by
forming the full conditional covariance matrix
$[\widehat{\bm\phi}(\mathbf s^{*}_{a})^{\!\top}\widehat\Lambda_{\mathrm{cond}}\widehat{\bm\phi}(\mathbf s^{*}_{b})+\widehat\sigma^{2}\delta_{ab}]_{a,b}$
and applying a Scheffé- or max-statistic adjustment to the
pointwise quantile.  We evaluate only pointwise intervals in this
paper; simultaneous coverage across regions is a direct by-product
of the covariance structure and is left to future work.

\paragraph{State-space interpretation of the basis scores.}
Because Stage~2 places no prior on $\mathbf A$ across the sample
index $j$, the learned score sequence $\{\bm\alpha_{j}\}_{j=1}^{T}$
can be treated \emph{post hoc} as noisy observations of a
$K$-dimensional latent process and handed to any classical
time-series or state-space model.  In sensor-network tasks where
$j$ indexes time, fitting a first-order vector autoregression
\[
  \bm\alpha_{j}=\bm\Gamma\,\bm\alpha_{j-1}+\mathbf w_{j},
  \qquad \mathbf w_{j}\sim(\mathbf 0,\mathbf Q),
\]
to the learned sequence---by ridge regression or the Yule--Walker
equations---yields one-step-ahead predictions
$\widehat{\bm\alpha}_{T+1}=\widehat{\bm\Gamma}\,\widehat{\bm\alpha}_{T}$
at unseen time points, and hence extrapolated latent fields
$\bm\eta_{T+1,i}=f_{\hat{\bm\theta}}(\mathbf x_{T+1,i})+\mu_{\hat{\bm\psi}}(\mathbf x_{T+1,i})+\sum_{k}\widehat\alpha_{k,T+1}\phi_{k}(\mathbf s_{i})$,
without retraining either the adapter or the first stage.  The cost
is a single $K\!\times\!K$ linear system.

\paragraph{Conformal compatibility.}
The plug-in intervals~\eqref{eq:cgi} are asymptotic; undercoverage
is expected whenever the estimates
$(\widehat{\bm\Phi},\widehat\Lambda,\widehat\sigma^{2})$ are biased
relative to their population counterparts, or when the query
distribution drifts from the training distribution (e.g., from
training stations to held-out stations).  The estimated conditional
variance $\hat v(\mathbf s^{*},j)$ is nonetheless fully compatible
with split conformal prediction~\citep{vovk2005algorithmic}: given a
calibration set disjoint from training, replacing the Gaussian
quantile $z_{\alpha/2}$ with the $(1{-}\alpha)$-quantile of the
normalised conformity scores
$s_{i}=|Y_{i}^{\mathrm{cal}}-\hat\eta_{i}|/\sqrt{\hat v_{i}}$ yields
a conformalised prediction interval with finite-sample marginal
coverage of at least $1-\alpha$ while retaining closed-form
inference.  A full empirical evaluation of this
extension---particularly the choice of calibration set under
spatial distribution drift---is left to future work.

\paragraph{Joint first-stage updates.}
The ADMM formulation also extends to joint first-stage fine-tuning by
reinstating a $\bm\theta$-step alongside the trend update~(T),
turning the post-hoc adapter into a one-stage spatially regularized
training procedure.  We leave a systematic study of when this is
preferable to the two-stage protocol to future work.

\paragraph{Expanded limitations.}
Two further technical items behind the main-text limitations.
\emph{Nonconvex ADMM and convergence:}
our training procedure falls into the class of nonconvex,
stochastic, three-block ADMM~\citep{boyd2011distributed}, for which
global convergence guarantees are generally unavailable; we
establish no rate and rely on a practical early-stopping criterion:
we stop when the dual residual
$\rho\|\mathbf Z^{(k)}-\mathbf Z^{(k-1)}\|_{F}$ and the basis change
$\Delta\bm\Phi$ both fall below a relative tolerance
$\varepsilon^{\text{abs}}+\varepsilon^{\text{rel}}\cdot\max(\|\mathbf Z\|,\|\mathbf Y\|)$,
subject to a minimum number of outer iterations.  Optimization is
stabilised in practice by three choices: (i)~the Stage-1 warm start,
which places the iterates in a favourable region; (ii)~the strongly
convex Gaussian $\mathbf Z$-subproblem
(Proposition~\ref{prop:z-update}), which gives the consensus step a
contraction factor $\rho/(\rho+2)\!\in\!(0,1)$; and
(iii)~infrequent basis updates that are frozen after
$N_{\text{freeze}}$ sweeps, restricting the later stage to the
$(\bm\psi,\mathbf Z,\mathbf U)$ block.  The spatial smoothness
penalty on $\bm\Phi$ additionally restricts the effective search
space.  Empirical primal/dual-residual traces for the synthetic
benchmark (seed~42) are in Appendix~\ref{app:admm-convergence}.
\emph{Rank selection:}
the basis rank $K$ is chosen via the cumulative-variance
rule~\eqref{eq:rank-selection} with a uniform 90\% threshold; the
downstream covariance estimator further applies
\citet{wang2017regularized}'s eigenvalue-thresholding rule, so
weak directions may contribute negligibly or be truncated by the
estimated noise level (with our well-defined reformulation for the
empty-defining-set corner case,
Appendix~\ref{app:cov-correction}).  The threshold
$\tau_{\mathrm{var}}=0.9$ is a heuristic; a full sensitivity
analysis over $K$ or an alternative criterion (e.g.\ information
criterion or scree plot~\citep{cattell1966scree}) is future work.
\emph{Undercoverage sources (detailed):} the four sources behind
limitation~(ii) are (a)~plug-in substitution
$\widehat{\bm\Phi}\to\bm\Phi$ and
$(\widehat\Lambda,\widehat\sigma^{2})\to(\Lambda,\sigma^{2})$
injecting $O(\|\widehat{\bm\Phi}-\bm\Phi\|_{F}^{2})$ bias
into~\eqref{eq:kriging-se}; (b)~deep trend
$f_{\hat{\bm\theta}}+\mu_{\hat{\bm\psi}}$ treated as a fixed point
estimate, so residual fitting error in $\bm\psi$ is not propagated;
(c)~working-independence estimate $\widehat\Lambda$ understating
the marginal second moment of $\bm\alpha_{j}$ when scores are
temporally correlated; and (d)~for binary classification, the
latent-Gaussian calibration on the logit scale inherits the
finite-sample caveats of binomial Wald
intervals~\citep{agresti1998approximate} near the probability-$0/1$
boundaries.

\section{Evaluation metrics: full definitions}
\label{app:metrics}

This appendix collects the formulas and definitions behind the
metric summary in Section~\ref{subsec:metrics}.  Throughout,
$\mathcal I\subset\{1,\dots,T\}\times\{1,\dots,N\}$ is an
evaluation index set and $\widehat{\mathbf Y}=g(\widehat{\bm\eta})$
is the model prediction on the full grid under the fitted latent
field.

\paragraph{Pointwise accuracy.}
In the time-split experiments (Appendix~\ref{subsec:synth}
and Section~\ref{subsec:kaust}), the adapter observes $\mathbf Y$ at all
$N$ locations and reconstructs the spatial field via
$\widehat{\mathbf Y}_{j}=\text{trend}_{j}+\bm\Phi\bm\Phi^{\!\top}(\mathbf Y_{j}-\text{trend}_{j})$;
RMSE therefore measures \emph{reconstruction} quality, not
out-of-sample prediction (which is evaluated separately in
Section~\ref{subsec:weather2k}).  For regression
($g=\mathrm{identity}$) we report
\begin{equation}\label{eq:pointwise_def}
\begin{aligned}
\mathrm{RMSE}_{\mathcal I}
&=\Biggl[\tfrac{1}{|\mathcal I|}\!\!\sum_{(j,i)\in\mathcal I}\!
   \bigl(\mathbf Y_{j,i}-\widehat{\mathbf Y}_{j,i}\bigr)^{2}\Biggr]^{1/2}\!,
&\;
\mathrm{MAE}_{\mathcal I}
&=\tfrac{1}{|\mathcal I|}\!\!\sum_{(j,i)\in\mathcal I}\!
   \bigl|\mathbf Y_{j,i}-\widehat{\mathbf Y}_{j,i}\bigr|,\\[2pt]
R^{2}_{\mathcal I}
&=1-\frac{\sum_{(j,i)\in\mathcal I}\!\bigl(\mathbf Y_{j,i}-\widehat{\mathbf Y}_{j,i}\bigr)^{2}}
        {\sum_{(j,i)\in\mathcal I}\!\bigl(\mathbf Y_{j,i}-\overline{\mathbf Y}_{\mathcal I}\bigr)^{2}},
&\; \overline{\mathbf Y}_{\mathcal I}
&=\tfrac{1}{|\mathcal I|}\!\!\sum_{(j,i)\in\mathcal I}\!\mathbf Y_{j,i}.
\end{aligned}
\end{equation}
For binary classification ($g=\mathrm{sigmoid}$) we report
accuracy, AUC, and $F_{1}$ at a $0.5$ decision threshold
(standard definitions).

\paragraph{Spatial-dependence fidelity.}
We compare predicted and reference spatial covariance matrices
via the relative Frobenius error
\begin{equation}\label{eq:covfrob_def}
\mathrm{CovFrob}
\;=\;
\frac{\bigl\lVert\widehat{\bm\Sigma}_{\text{pred}}
            -\bm\Sigma_{\text{ref}}\bigr\rVert_{F}}
     {\lVert\bm\Sigma_{\text{ref}}\rVert_{F}},
\qquad
\widehat{\bm\Sigma}_{\text{pred}}
=\tfrac{1}{T_{\mathcal I}-1}
 \widetilde{\widehat{\mathbf Y}}^{\,\top}
 \widetilde{\widehat{\mathbf Y}}
 \;\in\;\mathbb R^{N\times N},
\end{equation}
with $T_{\mathcal I}$ the number of repetition indices in
$\mathcal I$ and $\widetilde{\widehat{\mathbf Y}}$ the
repetition-centered prediction matrix.  The reference
$\bm\Sigma_{\text{ref}}$ is the analytic population covariance
when the DGP is known and the empirical covariance otherwise.
As an alternative characterization insensitive to global
rescaling, we also report the semivariogram discrepancy
\begin{equation}\label{eq:semivar-score}
\mathrm{SV}_{\text{score}}
\;=\;
\Biggl[\frac{1}{B}\sum_{b=1}^{B}
  \Bigl(
    \tfrac{\bar{\hat\gamma}_{\text{pred}}(h_{b})-\bar{\hat\gamma}_{\text{obs}}(h_{b})}
          {\bar{\hat\gamma}_{\text{obs}}(h_{b})+\delta}
  \Bigr)^{2}\Biggr]^{1/2},
\end{equation}
where $\bar{\hat\gamma}(\cdot)$ is the Matheron empirical
semivariogram~\citep{mueller2022gstools} averaged over
repetitions, $\{h_{b}\}_{b=1}^{B}$ are adaptive distance bins,
and $\delta>0$ is a denominator stabilizer.  For classification
we probe second-order structure through the closed-form logit
variance of Section~\ref{subsec:cov-est} instead.

\paragraph{Interval calibration.}
For regression (Weather2K spatial holdout) the adapter supplies
$100(1-\alpha)\%$ prediction intervals
$\widehat{\mathrm{PI}}_{j,i}=[\widehat L_{j,i},\widehat U_{j,i}]$,
and we assess calibration through
\begin{equation}\label{eq:cp-mpiw}
\mathrm{CP}
=\tfrac{1}{|\mathcal I|}
 \!\!\sum_{(j,i)\in\mathcal I}\!\!
 \mathbf 1\!\bigl\{\mathbf Y_{j,i}\in\widehat{\mathrm{PI}}_{j,i}\bigr\},
\qquad
\mathrm{MPIW}
=\tfrac{1}{|\mathcal I|}
 \!\!\sum_{(j,i)\in\mathcal I}\!\!
 \bigl(\widehat U_{j,i}-\widehat L_{j,i}\bigr),
\end{equation}
at the nominal level $\alpha=0.05$.  For binary classification
(Wheat Head) the interval~\eqref{eq:lnui} is a logistic-normal
interval on the probability scale, not a prediction interval for
$y\in\{0,1\}$; we therefore report MPIW and a location-level
$\mathrm{CP}=|\mathcal S|^{-1}\sum_{\mathbf s\in\mathcal S}
\mathbf 1\{\bar p(\mathbf s)\in\widehat{\mathrm{PI}}(\mathbf s)\}$
against $\bar p(\mathbf s)=T_{\mathrm{test}}^{-1}\sum_{j}y_{j}(\mathbf s)$,
and the expected calibration error
\begin{equation}\label{eq:ece}
\mathrm{ECE}
\;=\;
\sum_{b=1}^{B}\frac{|B_{b}|}{n}
  \bigl|\bar p_{b}-\bar y_{b}\bigr|,
\end{equation}
where the test predictions are binned into $B$ equal-width bins
of predicted probability, $\bar p_{b}$ and $\bar y_{b}$ are the
mean predicted probability and mean observed frequency in
bin~$b$, and $|B_{b}|$ is the bin count.
$\mathrm{ECE}$ is used only in the Bernoulli basis-update ablation
of Appendix~\ref{app:bce-ablation}, where it discriminates among
surrogate variants on point predictions.

\section{Proposition~\ref*{prop:z-update} and proof}\label{app:proofs}

\begin{proposition}[Closed-form $\mathbf Z$-update under the Gaussian/identity model]
\label{prop:z-update}
Assume $\ell_{\mathrm{data}}(\mathbf M)=\|\mathbf M\|_{F}^{2}$ and
let $\mathbf{Res}_{\mathcal B}\coloneqq\mathbf R_{\mathcal B}(\bm\psi)-\mathbf U_{\mathcal B}$
denote the dually-corrected residual.  The $\mathbf Z$-subproblem
\begin{equation}\label{eq:z-subproblem}
\min_{\mathbf Z}\;\|\mathbf Z\mathbf P^{\!\perp}\|_{F}^{2}
  +\tfrac{\rho}{2}\|\mathbf Z-\mathbf{Res}_{\mathcal B}\|_{F}^{2}
\end{equation}
is $\rho$-strongly convex and its unique global minimizer is
given by~\eqref{eq:z-closed-form}.
\end{proposition}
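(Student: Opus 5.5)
The plan is to use that $\mathbf P=\bm\Phi\bm\Phi^{\!\top}$ is an orthogonal projector. The Stiefel constraint $\bm\Phi^{\!\top}\bm\Phi=\mathbf I_K$ gives $\mathbf P^2=\mathbf P=\mathbf P^{\!\top}$, so $\mathbf P^{\!\perp}$ is also a symmetric idempotent with $\mathbf P\mathbf P^{\!\perp}=\mathbf 0$ and $\mathbf P+\mathbf P^{\!\perp}=\mathbf I_N$. The objective in~\eqref{eq:z-subproblem} then decouples into its row-space components along $\operatorname{range}(\mathbf P)$ and $\operatorname{range}(\mathbf P^{\!\perp})$.

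First I establish strong convexity. The map $\mathbf Z\mapsto\|\mathbf Z\mathbf P^{\!\perp}\|_F^2$ is a convex quadratic, being the squared norm of a linear map. The term $\tfrac{\rho}{2}\|\mathbf Z-\mathbf{Res}_{\mathcal B}\|_F^2$ is $\rho$-strongly convex, since its Hessian is $\rho\,\mathbf I$ on $\mathbb R^{L\times N}$. Their sum is therefore $\rho$-strongly convex, and in fact has Hessian $2(\cdot)\mathbf P^{\!\perp}+\rho\,\mathbf I\succeq\rho\,\mathbf I$. Strong convexity gives existence and uniqueness of the global minimizer, so it suffices to exhibit a stationary point.

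For the stationarity condition I differentiate, using $\mathbf P^{\!\perp\top}\mathbf P^{\!\perp}=\mathbf P^{\!\perp}$:
\[
2\mathbf Z\mathbf P^{\!\perp}+\rho(\mathbf Z-\mathbf{Res}_{\mathcal B})=\mathbf 0.
\]
Right-multiplying by $\mathbf P$ kills the first term and yields $\mathbf Z\mathbf P=\mathbf{Res}_{\mathcal B}\mathbf P$. Right-multiplying by $\mathbf P^{\!\perp}$ yields $(2+\rho)\mathbf Z\mathbf P^{\!\perp}=\rho\,\mathbf{Res}_{\mathcal B}\mathbf P^{\!\perp}$. Writing $\mathbf Z=\mathbf Z\mathbf P+\mathbf Z\mathbf P^{\!\perp}$ then reproduces~\eqref{eq:z-closed-form} exactly. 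Alternatively, I could substitute~\eqref{eq:z-closed-form} into the gradient and check that it vanishes. That $\rho/(\rho+2)\in(0,1)$ for $\rho>0$ is immediate.

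There is no real obstacle here. The only point needing care is justifying the projector identities from the Stiefel constraint, which is what lets $\mathbf P^{\!\perp}$ be applied on the right to split the equation. Without orthonormality the penalty would involve $\mathbf I-\bm\Phi\bm\Phi^{\!\top}$ as a non-idempotent matrix, and the closed form would instead require inverting $\rho\mathbf I+2(\mathbf P^{\!\perp})^{\!\top}\mathbf P^{\!\perp}$. So I will state explicitly that $\bm\Phi$ is held feasible during the (Z) step.
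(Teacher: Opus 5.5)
Your proof is correct and takes essentially the same route as the paper: both decompose $\mathbf Z=\mathbf Z\mathbf P+\mathbf Z\mathbf P^{\!\perp}$ using the orthogonal-projector structure and solve the in-basis and off-basis components separately. The only difference is that the paper splits the objective via the Pythagorean identity before differentiating, whereas you right-multiply the joint stationarity condition by $\mathbf P$ and $\mathbf P^{\!\perp}$. Your explicit derivation of the projector identities from the Stiefel constraint makes precise an assumption the paper leaves implicit.
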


\begin{proof}
Since $\mathbf P$ and $\mathbf P^{\!\perp}$ are orthogonal
idempotents, any matrix $\mathbf M\in\mathbb R^{L\times N}$
decomposes as
$\mathbf M=\mathbf M\mathbf P+\mathbf M\mathbf P^{\!\perp}$
with $\|\mathbf M\|_{F}^{2}=\|\mathbf M\mathbf P\|_{F}^{2}+\|\mathbf M\mathbf P^{\!\perp}\|_{F}^{2}$.
Applied to the coupling term in~\eqref{eq:z-subproblem},
\[
\|\mathbf Z-\mathbf{Res}_{\mathcal B}\|_{F}^{2}
=\|\mathbf Z\mathbf P-\mathbf{Res}_{\mathcal B}\mathbf P\|_{F}^{2}
+\|\mathbf Z\mathbf P^{\!\perp}-\mathbf{Res}_{\mathcal B}\mathbf P^{\!\perp}\|_{F}^{2},
\]
while the data term $\|\mathbf Z\mathbf P^{\!\perp}\|_{F}^{2}$
depends only on the $\mathbf P^{\!\perp}$-component.  The objective
therefore separates into two independent strongly convex problems.
On the in-basis direction, only the coupling term involves
$\mathbf Z\mathbf P$ and is minimized at
$\mathbf Z\mathbf P=\mathbf{Res}_{\mathcal B}\mathbf P$.  On the
off-basis direction the subproblem in
$\mathbf X\coloneqq\mathbf Z\mathbf P^{\!\perp}$ is
$\|\mathbf X\|_{F}^{2}+(\rho/2)\|\mathbf X-\mathbf{Res}_{\mathcal B}\mathbf P^{\!\perp}\|_{F}^{2}$,
whose first-order condition
$2\mathbf X+\rho(\mathbf X-\mathbf{Res}_{\mathcal B}\mathbf P^{\!\perp})=\mathbf 0$
gives $\mathbf X=(\rho/(\rho+2))\,\mathbf{Res}_{\mathcal B}\mathbf P^{\!\perp}$.
Summing the two components yields~\eqref{eq:z-closed-form}.
Strong convexity follows from the Hessian
$2\mathbf P^{\!\perp}+\rho\mathbf I_{N}$ having minimum eigenvalue
$\rho>0$ on the $\mathbf P$-direction.
\end{proof}

\section{Basis extension, conditional predictive variance, and intervals}
\label{app:conditional-variance}

This appendix supplies the predictive-variance machinery deferred
from Section~\ref{subsec:cov-est}.

\paragraph{Kriging conditional variance.}
Given an observation set $\mathcal O_{j}\subseteq\{1,\dots,N\}$ at
sample $j$, the fixed-rank kriging conditional
variance~\citep{cressie2008fixed} at a query location $\mathbf s^{*}$
is
\begin{equation}\label{eq:kriging-se}
  \hat v(\mathbf s^{*},j)
  \;\coloneqq\;
  \widehat\sigma^{2}
  \;+\;
  \widehat{\bm\phi}(\mathbf s^{*})^{\!\top}\,
  \widehat\Lambda_{\mathrm{cond}}(\mathcal O_{j})\,
  \widehat{\bm\phi}(\mathbf s^{*}),
\end{equation}
with plug-in conditional score covariance
\begin{equation}\label{eq:lambda-cond}
  \widehat\Lambda_{\mathrm{cond}}(\mathcal O_{j})
  \;\coloneqq\;
  \Bigl(\widehat\Lambda^{-1}
        +\widehat\sigma^{-2}\,\bm\Phi_{\mathcal O_{j}}^{\!\top}\bm\Phi_{\mathcal O_{j}}\Bigr)^{-1}.
\end{equation}
When $\mathcal O_{j}=\emptyset$ (new-sample regime),
$\widehat\Lambda_{\mathrm{cond}}=\widehat\Lambda$ and $\hat v$ reduces
to the marginal variance; when $\mathcal O_{j}$ is the
training-station set (Weather2K spatial-holdout,
Section~\ref{subsec:weather2k}),
$\widehat\Lambda_{\mathrm{cond}}$ shrinks strictly below
$\widehat\Lambda$ and recovers the standard kriging variance.

\begin{proposition}[Plug-in conditional Gaussian predictive law]\label{prop:gaussian-predictive}
Under the optional Gaussian working model
$\bm\alpha_{j}\sim\mathcal N(\mathbf 0,\Lambda)$,
$\bm\epsilon_{j}\sim\mathcal N(\mathbf 0,\sigma^{2}\mathbf I)$,
plugging the ADMM estimates
$(\widehat{\bm\Phi},\widehat\Lambda,\widehat\sigma^{2},\widehat{\bm\theta},\widehat{\bm\psi})$
into the linear-Gaussian conditioning identities gives
\begin{equation}\label{eq:cond-gauss}
  \eta(\mathbf s^{*},j)\mid\mathbf r_{j,\mathcal O_{j}}
  \;\stackrel{\text{approx}}{\sim}\;
  \mathcal N\!\bigl(\hat\eta(\mathbf s^{*},j),\,\hat v(\mathbf s^{*},j)\bigr),
\end{equation}
with conditional mean
\begin{equation}\label{eq:point-pred}
  \hat\eta(\mathbf s^{*},j)
  =
  f_{\hat{\bm\theta}}(\mathbf x_{j}^{*})+\mu_{\hat{\bm\psi}}(\mathbf x_{j}^{*})
  + \widehat{\bm\phi}(\mathbf s^{*})^{\!\top}\,\widehat{\bm\alpha}_{j}(\mathcal O_{j}),
\quad
  \widehat{\bm\alpha}_{j}(\mathcal O_{j})=\widehat\sigma^{-2}\widehat\Lambda_{\mathrm{cond}}\bm\Phi_{\mathcal O_{j}}^{\!\top}\mathbf r_{j,\mathcal O_{j}},
\end{equation}
($\widehat{\bm\alpha}_{j}(\emptyset)=\mathbf 0$) and conditional
variance $\hat v$ from~\eqref{eq:kriging-se}--\eqref{eq:lambda-cond}.
This is a direct application of Gaussian conditioning with plug-in
moments; we state it as a convenient summary, not a finite-sample
coverage guarantee.
\end{proposition}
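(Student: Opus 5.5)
The plan is to treat the population quantities first and plug in afterwards. Fix a sample $j$ and condition on the frozen trend $f_{\hat{\bm\theta}}+\mu_{\hat{\bm\psi}}$, treating it as a deterministic offset. The observed residual vector is then
\[
\mathbf r_{j,\mathcal O_{j}}=\bm\Phi_{\mathcal O_{j}}\bm\alpha_{j}+\bm\epsilon_{j,\mathcal O_{j}},
\]
and the latent target is $\eta(\mathbf s^{*},j)-f_{\hat{\bm\theta}}(\mathbf x^{*}_{j})-\mu_{\hat{\bm\psi}}(\mathbf x^{*}_{j})=\bm\phi(\mathbf s^{*})^{\!\top}\bm\alpha_{j}$, possibly plus an independent nugget $\epsilon^{*}$ with variance $\sigma^{2}$. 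I will take the nugget to be included, because~\eqref{eq:kriging-se} carries the additive $\widehat\sigma^{2}$. Under the Gaussian working model, the vector $(\bm\alpha_{j},\bm\epsilon_{j},\epsilon^{*})$ is jointly Gaussian. Hence $(\mathbf r_{j,\mathcal O_{j}},\bm\alpha_{j})$ is jointly Gaussian, and the target is a linear map of $\bm\alpha_{j}$ plus independent noise.

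\textbf{Step 1: posterior of the scores.} I would compute the posterior of $\bm\alpha_{j}$ in information form. The prior is $\mathcal N(\mathbf 0,\Lambda)$ and the likelihood is $\mathcal N(\bm\Phi_{\mathcal O_j}\bm\alpha_j,\sigma^2\mathbf I)$. Completing the square in the log-density gives posterior precision $\Lambda^{-1}+\sigma^{-2}\bm\Phi_{\mathcal O_j}^{\!\top}\bm\Phi_{\mathcal O_j}$, which is $\Lambda_{\mathrm{cond}}^{-1}$ as in~\eqref{eq:lambda-cond}. The posterior mean is $\sigma^{-2}\Lambda_{\mathrm{cond}}\bm\Phi_{\mathcal O_j}^{\!\top}\mathbf r_{j,\mathcal O_j}$. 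As a cross-check, I would confirm via the Woodbury identity that this equals the covariance-form kriging expression $\Lambda\bm\Phi_{\mathcal O}^{\!\top}(\bm\Phi_{\mathcal O}\Lambda\bm\Phi_{\mathcal O}^{\!\top}+\sigma^2\mathbf I)^{-1}\mathbf r$. When $\mathcal O_j=\emptyset$, the data term is absent, so the posterior is the prior. This gives $\bm\alpha_j(\emptyset)=\mathbf 0$ and $\Lambda_{\mathrm{cond}}=\Lambda$.

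\textbf{Step 2: push forward to the query location.} Apply the linear functional $\bm\phi(\mathbf s^{*})^{\!\top}$ to the posterior of $\bm\alpha_{j}$ and add the independent nugget. This yields mean $\bm\phi(\mathbf s^{*})^{\!\top}\bm\alpha_j(\mathcal O_j)$ plus the trend offset. The variance is $\bm\phi(\mathbf s^{*})^{\!\top}\Lambda_{\mathrm{cond}}\bm\phi(\mathbf s^{*})+\sigma^2$, matching~\eqref{eq:point-pred} and~\eqref{eq:kriging-se}.

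\textbf{Step 3: plug-in.} Substitute $(\widehat{\bm\Phi},\widehat\Lambda,\widehat\sigma^{2})$ from~\eqref{eq:cov-est}, together with the TPS-extended row $\widehat{\bm\phi}(\mathbf s^{*})$ and the frozen $(\hat{\bm\theta},\hat{\bm\psi})$. Since the plug-in ignores estimation error, the law holds only as ``approx'', which is exactly what the statement claims. Two conditions must be checked here. First, $\widehat\Lambda$ must be positive definite; I would restrict to the retained effective-rank components after thresholding so that $\widehat\Lambda^{-1}$ exists. Second, $\widehat\sigma^2$ must be positive.

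The main obstacle is bookkeeping rather than analysis, namely keeping the nugget convention consistent: whether $\eta$ denotes the noise-free latent field or includes $\epsilon^{*}$. I would state explicitly that $\hat v$ is the predictive variance for the noisy response, which is the reading consistent with~\eqref{eq:kriging-se}. I would also note the rank-deficient case, where $\widehat\Lambda$ is singular; there the covariance-form expression from Step 1 should be used instead of the precision form.
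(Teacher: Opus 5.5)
Your proposal is correct and follows the paper's route: the paper justifies the proposition only as a direct application of linear-Gaussian conditioning with plug-in moments, and your three steps (information-form posterior for $\bm\alpha_j$, linear push-forward to $\mathbf s^{*}$, then plug-in of $(\widehat{\bm\Phi},\widehat\Lambda,\widehat\sigma^{2})$ and the TPS row $\widehat{\bm\phi}(\mathbf s^{*})$) are that argument written out. Your caveats correct genuine imprecisions that the paper leaves unaddressed: the additive $\widehat\sigma^{2}$ in \eqref{eq:kriging-se} makes $\hat v$ the predictive variance of the noisy response at a query $\mathbf s^{*}\notin\mathcal O_j$, not of the noise-free $\eta$; and the thresholded $\widehat\Lambda$ is typically singular (effective rank below $K$), so the covariance/Woodbury form has to replace \eqref{eq:lambda-cond}.
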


\paragraph{Prediction intervals.}
Proposition~\ref{prop:gaussian-predictive} gives the
$100(1-\alpha)\%$ plug-in regression interval for the latent
natural parameter,
\begin{equation}\label{eq:cgi}
  \hat\eta(\mathbf s^{*},j)\;\pm\;z_{\alpha/2}\,\sqrt{\hat v(\mathbf s^{*},j)}.
\end{equation}
For a binary task ($g=\sigma$), the probability
$p(\mathbf s^{*},j)=\sigma(\eta(\mathbf s^{*},j))$ is the
\emph{pushforward} of the plug-in interval through $\sigma$:
\begin{equation}\label{eq:lnui}
  \bigl[\,\sigma\bigl(\hat\eta(\mathbf s^{*},j)-z_{\alpha/2}\sqrt{\hat v(\mathbf s^{*},j)}\bigr),\;
         \sigma\bigl(\hat\eta(\mathbf s^{*},j)+z_{\alpha/2}\sqrt{\hat v(\mathbf s^{*},j)}\bigr)\,\bigr].
\end{equation}
Equation~\eqref{eq:lnui} quantifies uncertainty about the
\emph{predicted probability induced by the latent uncertainty},
not a coverage statement for the Bernoulli outcome
$Y\in\{0,1\}$; because $\sigma$ is monotone the endpoints map
directly without a delta-method step.  The link-scale residual used
to form $\hat\eta$ and $\hat v$ is
$\mathbf r_{j}=g^{\dagger}(\mathbf y_{\varepsilon,j})-f_{\hat{\bm\theta}}(\mathbf x_{j})-\mu_{\hat{\bm\psi}}(\mathbf x_{j})$
with $\varepsilon$-clamped labels ($\varepsilon=10^{-7}$) so the
logit stays finite.

Per-sample cost is $O(|\mathcal O_{j}|K^{2}+K^{3})$; per-query cost
is $O(K^{2})$.

\paragraph{Basis extension to unseen locations.}
The estimated basis $\widehat{\bm\Phi}\in\mathbb R^{N\times K}$
is defined only on the $N$ observed locations
$\{\mathbf s_{1},\dots,\mathbf s_{N}\}$ used during Stage~2.  For a
query location $\mathbf s^{*}$ that is \emph{not} among these
rows---e.g., a held-out weather station or an evaluation-time image
patch---the basis row
$\widehat{\bm\phi}(\mathbf s^{*})\in\mathbb R^{K}$ is obtained by
passing each learned column of $\widehat{\bm\Phi}$ through a
deterministic thin-plate-spline smoother of $\mathcal S$ and
evaluating the smoother at $\mathbf s^{*}$.  All downstream
conditional predictive formulas are therefore conditional on this
interpolated basis row: out-of-sample uncertainty at $\mathbf s^{*}$
is driven by both the estimated covariance structure
$(\widehat\Lambda,\widehat\sigma^{2})$ and a separately interpolated
basis representation $\widehat{\bm\phi}(\mathbf s^{*})$, and the
quality of the latter interacts with the roughness penalty
$\bm\Omega$ in~\eqref{eq:loss-general}.  When $\mathbf s^{*}$
happens to coincide with one of the observed locations the
interpolation step is trivial and
$\widehat{\bm\phi}(\mathbf s^{*})$ is read directly from the
corresponding row of $\widehat{\bm\Phi}$.  We emphasize that this
out-of-sample basis extension is a deterministic \emph{post-hoc}
interpolation step and is \emph{not} jointly optimized within the
Stage~2 objective~\eqref{eq:loss-general}.

\paragraph{Conditional predictive variance: working-model context.}
Formulas \eqref{eq:kriging-se}--\eqref{eq:lambda-cond} have the
structure of the fixed-rank kriging predictive
variance~\citep{cressie2008fixed}, with the true
$(\bm\Phi,\Lambda,\sigma^{2})$ replaced by the ADMM estimates
$(\widehat{\bm\Phi},\widehat\Lambda,\widehat\sigma^{2})$; they are
therefore a \emph{plug-in} approximation rather than a true BLUP.
Stage~2 training itself does not require a Gaussian prior on the
scores $\bm\alpha_{j}$: the basis, the score matrix, and the
covariance estimator~\eqref{eq:cov-est} are derived purely from
the first two moments.  The downstream conditional prediction and
uncertainty formulas do invoke an \emph{optional Gaussian working
model} $\bm\alpha_{j}\sim\mathcal N(\mathbf 0,\Lambda)$,
$\bm\epsilon_{j}\sim\mathcal N(\mathbf 0,\sigma^{2}\mathbf I)$ to
turn the estimated second moments into closed-form conditional
means and variances; under that assumption
$\widehat\Lambda_{\mathrm{cond}}(\mathcal O_{j})$ coincides with
the Bayesian posterior covariance of $\bm\alpha_{j}$.

\paragraph{Cost of~\eqref{eq:kriging-se}--\eqref{eq:lambda-cond}.}
For each sample $j$ we form and factor (e.g., via Cholesky) the
$K\!\times\!K$ matrix
$\widehat\Lambda_{\mathrm{cond}}(\mathcal O_{j})^{-1}=\widehat\Lambda^{-1}+\widehat\sigma^{-2}\bm\Phi_{\mathcal O_{j}}^{\!\top}\bm\Phi_{\mathcal O_{j}}$
at one-time cost $O(|\mathcal O_{j}|K^{2}+K^{3})$; each subsequent
query location $\mathbf s^{*}$ then costs only $O(K^{2})$ for the
bilinear form in~\eqref{eq:kriging-se}.  In the sparse-observation
regime $|\mathcal O_{j}|\ll K$, the inverse can be rewritten via
the Sherman--Morrison--Woodbury identity as
$\widehat\Lambda_{\mathrm{cond}}(\mathcal O_{j})=\widehat\Lambda-\widehat\Lambda\bm\Phi_{\mathcal O_{j}}^{\!\top}\bigl(\widehat\sigma^{2}\mathbf I_{|\mathcal O_{j}|}+\bm\Phi_{\mathcal O_{j}}\widehat\Lambda\bm\Phi_{\mathcal O_{j}}^{\!\top}\bigr)^{-1}\bm\Phi_{\mathcal O_{j}}\widehat\Lambda$,
reducing the computation to an $|\mathcal O_{j}|\!\times\!|\mathcal O_{j}|$
system at cost $O(|\mathcal O_{j}|^{3}+|\mathcal O_{j}|K^{2})$.
In either form the cost is independent of the full grid size~$N$.

\paragraph{Two score notions.}
The ADMM-recovered score matrix
$\widehat{\mathbf A}=\widehat{\mathbf Z}\widehat{\bm\Phi}$ from
Section~\ref{subsec:admm} is the \emph{in-sample} factor score
associated with the fitted consensus on the training indices,
whereas $\widehat{\bm\alpha}_{j}(\mathcal O_{j})$
in~\eqref{eq:point-pred} is the \emph{plug-in conditional} score
used for out-of-sample prediction under the Gaussian working
model; the two coincide in expectation when $\mathcal O_{j}$ is
the full training index set and the ADMM consensus has converged.

\section{ADMM step derivations}\label{app:admm-steps}

This appendix supplies the full derivations of the four ADMM
subproblems summarised in Section~\ref{subsec:admm}.

\paragraph{(T) Trend step.}
Only the coupling term of the augmented Lagrangian depends on
$\bm\psi$.  With $(\bm\Phi,\mathbf Z_{\mathcal B},\mathbf U_{\mathcal B})$
held fixed, we take a single SGD step on $\bm\psi$ to decrease
$\tfrac{\rho}{2}\|\mathbf Z_{\mathcal B}-\mathbf R_{\mathcal B}(\bm\psi)+\mathbf U_{\mathcal B}\|_{F}^{2}$;
the gradient flows through $\mathbf M_{\bm\psi}$ while the frozen
first stage $f_{\hat{\bm\theta}}$ does not contribute.  Intuitively,
this step pushes the residual trend $\mu_{\bm\psi}$ to match the
dually-corrected consensus $\mathbf Z_{\mathcal B}+\mathbf U_{\mathcal B}$.

\paragraph{(B) Basis step.}
With $(\bm\psi,\mathbf Z_{\mathcal B},\mathbf U_{\mathcal B})$
fixed, the $\bm\Phi$-subproblem
\[
  \min_{\bm\Phi}\;
  \ell_{\mathrm{data}}(\mathbf Z_{\mathcal B}\mathbf P^{\!\perp})
  +\lambda_{1}\operatorname{tr}\!\bigl(\bm\Phi^{\!\top}\bm\Omega\bm\Phi\bigr)
  +\lambda_{2}\|\bm\Phi\|_{1}
  \quad\text{s.t.}\quad
  \bm\Phi^{\!\top}\bm\Phi=\mathbf I_{K}
\]
has the same form as the regularized spatial PCA
of~\citet{wang2017regularized} applied to the mean-centered
consensus $\mathbf Z_{\mathcal B,c}=\mathbf Z_{\mathcal B}-\overline{\mathbf Z}_{\mathcal B}$.
For the Gaussian case $\ell_{\mathrm{data}}(\mathbf M)=\|\mathbf M\|_{F}^{2}$,
using $\|\mathbf Z\mathbf P^{\!\perp}\|_{F}^{2}
 =\|\mathbf Z\|_{F}^{2}-\operatorname{tr}(\bm\Phi^{\!\top}\mathbf Z_{c}^{\!\top}\mathbf Z_{c}\bm\Phi)$,
the $\lambda_{2}=0$ subproblem is solved in closed form by taking
$\bm\Phi$ to be the top-$K$ eigenvectors of the symmetrized
penalized Gram matrix
$\tfrac{1}{2}(\mathbf C_{\mathcal B}-\lambda_{1}\bm\Omega+(\mathbf C_{\mathcal B}-\lambda_{1}\bm\Omega)^{\!\top})$,
where $\mathbf C_{\mathcal B}=\mathbf Z_{\mathcal B,c}^{\!\top}\mathbf Z_{\mathcal B,c}$,
followed by SVD-based projection onto the Stiefel manifold
$\{\bm\Phi:\bm\Phi^{\!\top}\bm\Phi=\mathbf I_{K}\}$.  When
$\lambda_{2}>0$ enforces column sparsity, we replace the direct
eigen-step by an iteratively reweighted-$\ell_{1}$
(IRL$_{1}$)~\citep{candes2008enhancing} inner loop: starting from
the eigen-warm-start, we alternate a gradient step
$\bm\Phi\!+\!\alpha(\mathbf C_{\mathcal B}\!-\!\lambda_{1}\bm\Omega)\bm\Phi$,
an entry-wise soft-threshold with reweighted level
$\alpha\lambda_{2}/(|\bm\Phi|+\varepsilon)$, and an SVD retraction
onto the Stiefel manifold, until successive iterates of
$\bm\Phi$ agree to a tolerance (Algorithm~\ref{alg:admm}).  After a
fixed number of ADMM sweeps $N_{\text{freeze}}$ the basis is
frozen and only the trend $\bm\psi$, consensus $\mathbf Z$, and
dual $\mathbf U$ continue to update; we found this stabilisation
necessary to avoid small late-stage oscillations in $\bm\Phi$ that
do not affect prediction accuracy.

\paragraph{(Z) Consensus step.}
The $\mathbf Z_{\mathcal B}$-subproblem
\begin{equation}\label{eq:z-step}
  \min_{\mathbf Z\in\mathbb R^{L\times N}}\;
  \ell_{\mathrm{data}}(\mathbf Z\mathbf P^{\!\perp})
  +\tfrac{\rho}{2}\bigl\|\mathbf Z-\mathbf R_{\mathcal B}(\bm\psi)+\mathbf U_{\mathcal B}\bigr\|_{F}^{2}
\end{equation}
admits a closed-form solution in the Gaussian/identity case
(Proposition~\ref{prop:z-update}).  For the Bernoulli case we take
one proximal-gradient step~\citep{Parikh2014}
\[
  \mathbf Z_{\mathcal B}
  \;\leftarrow\;
  \mathbf Z_{\mathcal B}
  -\tau\!\Bigl[
     \nabla_{\mathbf Z}\,
       \mathrm{BCE}\!\bigl(\sigma(\mathbf Z_{\mathcal B}\mathbf P^{\!\perp}),\mathbf Y_{\mathcal B}\bigr)
     +\rho\bigl(\mathbf Z_{\mathcal B}-\mathbf R_{\mathcal B}(\bm\psi)+\mathbf U_{\mathcal B}\bigr)
  \Bigr],
\]
with a fixed step size $\tau$; in our implementation the
coupling gradient is averaged per entry of the mini-batch (the
effective penalty is $\rho/(LN)$) for numerical stability, a
standard rescaling convention in stochastic
ADMM~\citep{boyd2011distributed}.  One or two such iterations
suffice in all our classification experiments.

\paragraph{(U) Dual update.}
$\mathbf U_{\mathcal B}\leftarrow
   \mathbf U_{\mathcal B}+\mathbf Z_{\mathcal B}-\mathbf R_{\mathcal B}(\bm\psi)$.

\section{Per-iteration cost and memory of the dense ADMM implementation}
\label{app:periter-cost}

The trend step~(T) is a standard mini-batch update on the residual
network $\mu_{\bm\psi}$; the consensus step~(Z) scales as $O(LNK)$
via the rank-$K$ factorization of $\mathbf P^{\!\perp}$; and the
dual step~(U) is an elementwise update at $O(LN)$.
The basis step~(B) explicitly forms the $N\!\times\!N$ Gram
$\mathbf C=\mathbf Z_{c}^{\!\top}\mathbf Z_{c}$ at $O(LN^{2})$
time and $O(N^{2})$ memory, then calls a dense symmetric
eigensolver (\texttt{torch.linalg.eigh}) at $O(N^{3})$ and
truncates to the top $K$ eigenvectors; when $\lambda_{2}>0$,
a short IRL$_{1}$ inner loop adds a few $O(NK^{2})$
matrix--vector products.

In our experiments $N$ is moderate---$N=256$ for the
$16\!\times\!16$ Wheat Head patches, $\lesssim 2000$ stations
for Weather2K, and a few hundred locations on the KAUST grid
and synthetic configurations---and the basis step is executed
infrequently due to a configurable cadence $N_{\text{phi}}$
and is skipped entirely after a warm-start horizon
$N_{\text{freeze}}$, so the $O(N^{2})$ Gram and $O(N^{3})$
eigensolve never become the training bottleneck.  One to three
ADMM sweeps per mini-batch suffice for convergence in all
reported runs.

For larger $N$, the basis step can in principle be replaced by
an implicit-operator formulation
$\mathbf M\mapsto\mathbf Z_{c}^{\!\top}(\mathbf Z_{c}\mathbf M)-\lambda_{1}\bm\Omega\mathbf M$
combined with Lanczos- or randomized-SVD-type eigensolvers,
avoiding explicit $N\!\times\!N$ materialization; we do not
evaluate this variant here and flag it as a scalable path for
future work.

\section{Rank-selection corner case in the closed-form covariance estimator}
\label{app:cov-correction}

This appendix supplies the technical detail referenced from the
footnote in Section~\ref{subsec:cov-est}.  We recall the
rank-selection rule of \citet{wang2017regularized}, identify a
corner case in which it is not well defined, and give an equivalent
reformulation that is well defined for all inputs.

\paragraph{The original rule.}
Let $\widehat d_{1}\ge\cdots\ge\widehat d_{K}$ denote the
eigenvalues of $\widehat{\bm\Phi}^{\!\top}\mathbf S\,\widehat{\bm\Phi}$
and let $\tau\ge 0$ be the smoothness penalty.  For any
$L\in\{1,\dots,K\}$ define the noise estimate under $L$ active
components,
\begin{equation}\label{eq:sigma-L}
\widehat\sigma_{L}^{2}
\;\coloneqq\;
\frac{1}{N-L}\Bigl(\operatorname{tr}\mathbf S
       \;-\;\sum_{k=1}^{L}\!\bigl(\widehat d_{k}-\tau\bigr)\Bigr).
\end{equation}
Equation~(12) of \citet{wang2017regularized} sets the estimated
rank to
\begin{equation}\label{eq:wang-rank}
\widehat L_{\,\mathrm{W}}
\;=\;
\max\!\Bigl\{\,L\in\{1,\dots,K\}\;:\;
              \widehat d_{L}-\tau \;>\;\widehat\sigma_{L}^{2}\,\Bigr\},
\end{equation}
and then plugs $\widehat L_{\,\mathrm{W}}$ into the piecewise
$\widehat\sigma^{2}$ formula of their Eq.~(11).
$\widehat L_{\,\mathrm{W}}$ is the \emph{effective retained rank}
of the covariance model: once the Stage-2 basis budget $K$ is
fixed via~\eqref{eq:rank-selection}, \eqref{eq:wang-rank}
decides data-adaptively how many of the $K$ directions actually
enter $\widehat\Sigma_{\mathbf r}$.

\paragraph{The corner case.}
The set on the right-hand side of~\eqref{eq:wang-rank} can be
\emph{empty}: if every leading eigenvalue
$\widehat d_{L}$ fails to exceed $\tau+\widehat\sigma_{L}^{2}$
(typically in weak-signal regimes where the deep trend has already
absorbed most of the structure, e.g.\ the $10/10/80$ KAUST split
of Section~\ref{subsec:kaust} or low-prevalence patches in the
GWHD experiment of Section~\ref{subsec:image}), then
$\max\emptyset$ is undefined and so is
$\widehat L_{\,\mathrm{W}}$, leaving the downstream estimator
$\widehat\Sigma_{\mathbf r}$ undefined.

\paragraph{Equivalent well-defined reformulation.}
The estimated rank should reflect the number of components whose
shrunk eigenvalue is strictly positive, and the empty-set case
should correspond to $\widehat L=0$.  We therefore define
\begin{equation}\label{eq:rank-fixed}
\widehat L
\;\coloneqq\;
\#\bigl\{k\in\{1,\dots,K\}\;:\;\widehat\lambda_{k}>0\bigr\},
\qquad
\widehat\lambda_{k}
\;=\;
\max\!\bigl(\widehat d_{k}-\widehat\sigma^{2}-\tau,\;0\bigr),
\end{equation}
where $\widehat\sigma^{2}$ is selected jointly with $\widehat L$ by
the piecewise rule
\begin{equation}\label{eq:sigma-fixed}
\widehat\sigma^{2}
\;=\;
\begin{cases}
\widehat\sigma_{\widehat L}^{2}, & \widehat L\ge 1,\\[2pt]
\frac{1}{N}\operatorname{tr}\mathbf S, & \widehat L = 0,
\end{cases}
\end{equation}
and $\widehat\sigma_{L}^{2}$ is given by~\eqref{eq:sigma-L}.  On the
non-empty branch~\eqref{eq:rank-fixed} returns the same value as
\eqref{eq:wang-rank}; on the empty branch it returns
$\widehat L=0$, in which case~\eqref{eq:sigma-fixed} reduces to the
$\widehat d_{1}\le\tau$ case of Eq.~(11) of
\citet{wang2017regularized} and the plug-in estimator collapses to
the isotropic
$\widehat\Sigma_{\mathbf r}=\widehat\sigma^{2}\mathbf I_{N}$.  Thus
\eqref{eq:rank-fixed}--\eqref{eq:sigma-fixed} extend the original
rule continuously to all admissible inputs without changing it on
the regime in which it was originally defined.

\paragraph{Relation to rank selection in our experiments.}
In all reported experiments, the basis budget $K$ is selected first
from the cumulative explained variance of the Stage-1 residual
sample covariance, as summarized in
Appendix~\ref{app:rank-selection}.  This $K$ is not the final
covariance rank; it is an operational upper bound for the adapter
basis.  The effective retained rank is then determined inside the
covariance estimator by the Wang--Huang eigenvalue-thresholding
rule.  Thus the reported low-rank covariance model is obtained in
two steps: a scree-plot budget for the candidate spatial basis,
followed by spectral thresholding that keeps only the directions
with positive shrunk eigenvalues.  The
reformulation~\eqref{eq:rank-fixed}--\eqref{eq:sigma-fixed} does
not change this procedure on the non-empty branch; it only makes
the degenerate empty-set branch well defined by returning
$\widehat L=0$, in which case the estimator reduces to the
isotropic residual-noise model
$\widehat\Sigma_{\mathbf r}=\widehat\sigma^{2}\mathbf I_{N}$.


\section{ADMM convergence diagnostics}
\label{app:admm-convergence}

Figure~\ref{fig:admm-convergence} plots the primal and dual
ADMM residuals on the synthetic 1D benchmark of
Appendix~\ref{subsec:synth} for both the unregularized and
regularized adapter.

\begin{figure}[ht]
\centering
\includegraphics[width=\linewidth]{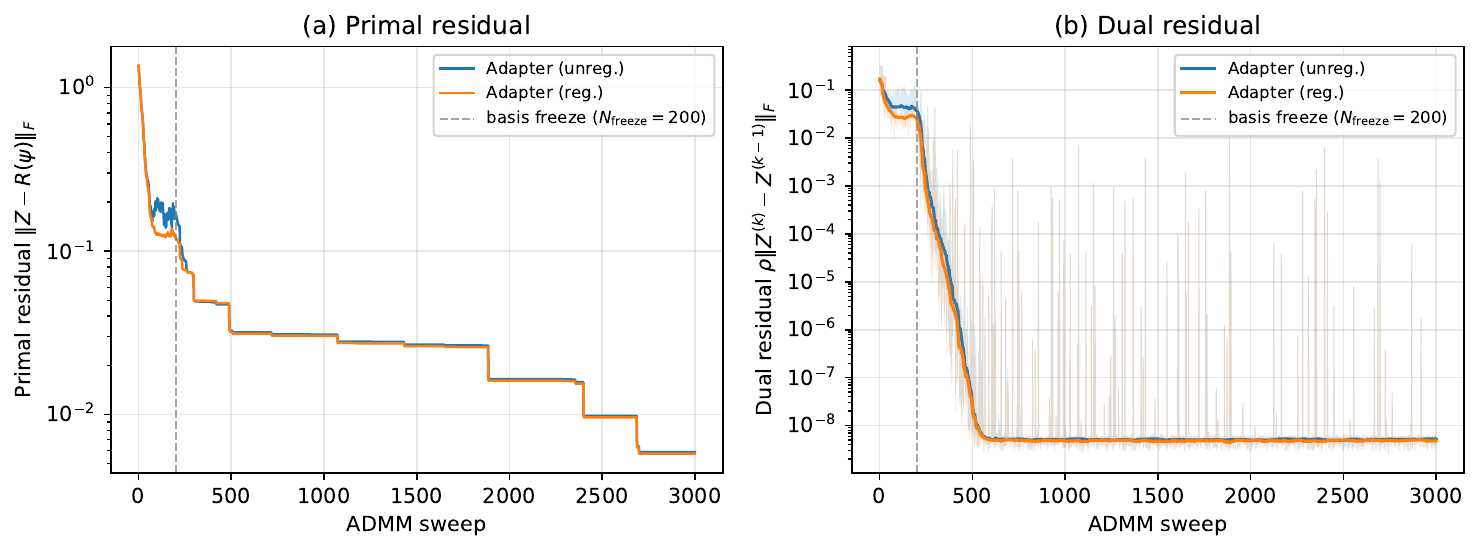}
\caption{\textbf{ADMM convergence on the synthetic benchmark
(seed~42).}
(\emph{a})~The primal residual
$\|\mathbf Z-\mathbf R(\bm\psi)\|_{F}$ decays monotonically in
both settings, with staircase steps corresponding to basis updates
before the freeze horizon $N_{\mathrm{freeze}}=200$.
(\emph{b})~The dual residual
$\rho\|\mathbf Z^{(k)}-\mathbf Z^{(k-1)}\|_{F}$ (rolling median
shown as solid line; raw trace as shading).  The unregularized
variant converges to ${\sim}10^{-8}$; the regularized variant shows
intermittent spikes typical of stochastic ADMM with a non-smooth
$\ell_{1}$ penalty, while its rolling median decays steadily.}
\label{fig:admm-convergence}
\end{figure}


\section{Synthetic experiment: full data-generating process}
\label{app:synth-dgp}
\label{subsec:synth}

This appendix supplies the full setup and data-generating process
behind the synthetic benchmark of Section~\ref{subsec:synth}.

\paragraph{Setup.}
The first-stage predictor $f_{\hat{\bm\theta}}$ is a frozen
ordinary-least-squares (OLS) linear fit of $\mathbf x_{i,t}$ on
$Y(s_i,t)$, solved in closed form on the training split; no
gradient-based training is performed in Stage~1.  Stage~2 runs the
mini-batch ADMM of Algorithm~\ref{alg:admm} with batch size
$L{=}64$ over the time axis; because Stage~1 is frozen, the (T)
trend step is a no-op and only the (B)/(Z)/(U) steps update per
iteration.  Means and standard errors are across $30$ random seeds.
We use the 1D toy deliberately to keep the Stage-2 mechanism
inspectable; the data-generating process---not the geometry---is
where we aim to reflect conditions a first-stage predictor would
face in practice.

\paragraph{Climate motivation.}
The Gaussian profile $\phi(s)=\exp(-s^{2})/\|\exp(-s^{2})\|_{2}$
mimics the Indian Ocean Basin (IOB) mode---a broad, sign-definite
monopole that climate reanalyses identify as the leading mode of
tropical Indian Ocean SST variability, driven physically by
ENSO-induced surface heat-flux anomalies communicated through an
atmospheric bridge, and exhibiting seasonal-scale
persistence~\citep{klein1999bridge, yang2007iob, xie2009capacitor}.
A persistent AR(1) score $\alpha_t$ ($\alpha_\rho{=}0.8$)
reproduces the basin mode's seasonal decorrelation.  Read as a
climate-reanalysis skeleton, $\bm\beta^{\!\top}\mathbf x_{i,t}$
plays the role of a regression estimate from ancillary
meteorological predictors and $\alpha_t\phi(s_i)$ is the residual
basin-scale anomaly mode that the regression cannot
explain---precisely the component the adapter is designed to
recover.

\paragraph{Data-generating process.}
\begin{enumerate}[label=\textbf{(\roman*)},leftmargin=1.5em,itemsep=2pt]
  \item \textbf{Time-varying covariates.}
        For $t\in[0,2\pi]$ generate the smooth base series
        \[
          \small
          \begin{aligned}
            &\text{temp}(t)=15+10\sin t+\varepsilon_{\text{temp}}(t),\quad
            &&\varepsilon_{\text{temp}}\!\sim\!\mathcal N(0,0.1^{2}), \\
            &\text{wind}(t)=|\,\mathcal N(3,0.1^{2})\,|,\\[-1pt]
            &\text{hum}(t)=70-0.5\,\text{temp}(t)+\varepsilon_{\text{hum}}(t),\quad
            &&\varepsilon_{\text{hum}}\!\sim\!\mathcal N(0,0.1^{2}), \\
            &\text{pres}(t)=1000+20\,\varphi_{\mathcal N(\pi,1)}(t),\\[-1pt]
            &\text{poll}(t)=50+2\,\text{temp}(t)-3\,\text{wind}(t)+\varepsilon_{\text{poll}}(t),\quad
            &&\varepsilon_{\text{poll}}\!\sim\!\mathcal N(0,0.1^{2}),
          \end{aligned}
        \]
        replicate them to every location with i.i.d.\ $\mathcal N(0,0.01^{2})$
        perturbations and stack as
        $\mathbf x_{i,t}\!\in\!\mathbb R^{5}$.

  \item \textbf{Optional drifts.}
        \[
          \gamma_t
            =0.6\,\gamma_{t-1}+\varepsilon_{\gamma,t},\;
            \varepsilon_{\gamma,t}\!\sim\!\mathcal N(0,1-0.6^{2}),
          \qquad
          \alpha_t
            =\alpha_\rho\,\alpha_{t-1}
             +\varepsilon_{\alpha,t},\;
            \varepsilon_{\alpha,t}\!\sim\!\mathcal N\bigl(0,5^{2}(1-0.8^{2})\bigr).
        \]
        Unless stated otherwise we disable the global drift
        ($\gamma_t\!\equiv\!0$) and retain only the spatial score
        $\alpha_t$, so that the $N$ locations share a single rank-1
        spatial component rather than having its effect dominated by a
        constant-in-space offset.

  \item \textbf{Observation model.}
        \[
          Y(s_i,t)
          =50+\mathbf x_{i,t}^{\!\top}\boldsymbol\beta+\gamma_t+\alpha_t\,\phi(s_i)+\varepsilon_{i,t},
          \quad
          \varepsilon_{i,t}\!\sim\!\mathcal N(0,4^{2}),
        \]
\end{enumerate}
where $\boldsymbol\beta=(10,-9.5,2.6,-1.3,1.2)^{\!\top}$.

\paragraph{Results.}

\begin{table}[htbp]
  \centering
  \caption{Reconstruction performance on the 1D synthetic benchmark.
  RMSE, MAE, and $\mathrm{Cov}_{\mathrm{Frob}}$ shown in units of
  $10^{-2}$; $R^{2}$ in percent.  The adapter observes $\mathbf Y$
  at all locations, so these metrics audit reconstruction quality
  and covariance recovery against the known ground truth.  Mean
  (std.\ error) across 30 random seeds.}
  \label{tab:temporal_results}
  \begin{tabular}{@{}lcccc@{}}
    \toprule
    \textbf{Model}
      & \textbf{RMSE} ($\times10^{-2}$)
      & \textbf{MAE} ($\times10^{-2}$)
      & $\mathbf{R^{2}}\;(\%)$
      & $\mathbf{Cov_{\mathrm{Frob}}}$ ($\times10^{-2}$) \\
    \midrule
    OLS (backbone only)    & 5.723\,(0.003) & 4.566\,(0.003) & 97.810\,(0.004) & 1.752\,(0.012) \\
    + Adapter (unreg.)     & 5.717\,(0.003) & 4.561\,(0.003) & 97.814\,(0.004) & 1.747\,(0.011) \\
    + Adapter (reg.)       & \textbf{5.711}\,(0.003) & \textbf{4.557}\,(0.003) & \textbf{97.819}\,(0.004) & \textbf{1.726}\,(0.009) \\
    \bottomrule
  \end{tabular}
\end{table}


\section{Rank selection: per-experiment cumulative-variance curves}
\label{app:rank-selection}

This appendix supplies the data-driven rank selection behind the
$K$ values stated in Section~\ref{subsec:twostage}.  For each
dataset we plot the cumulative explained variance of the Stage-1
residual sample covariance and mark the smallest $K$ satisfying
$\sum_{i=1}^{K}\hat\lambda_{i}/\sum_{i=1}^{N}\hat\lambda_{i}\ge\tau_{\mathrm{var}}=0.9$.
The resulting $K$ values are used directly in our experiments.

\begin{table}[htbp]
  \centering
  \caption{Rank-selection summary.  $K$ is deterministic per
           split and comes from the cumulative-variance rule at
           $\tau_{\mathrm{var}}=0.9$ (full training split).  The
           \emph{effective rank} is the integer-valued output of
           the Wang--Huang eigenvalue-thresholding rule
           (Eq.~\eqref{eq:wang-rank}, Appendix~\ref{app:cov-correction})
           used downstream by $\widehat\Sigma_{\mathbf r}$
           (Section~\ref{subsec:cov-est}); we report its mean over
           the 30 Stage-2 seeds, which is why the column carries a
           decimal part even though every individual seed yields an
           integer count of retained directions.}
  \label{tab:rank-selection}
  \begin{tabular}{@{}llccc@{}}
    \toprule
    \textbf{Dataset} & \textbf{Backbone / split}
      & $\boldsymbol{N}$
      & $\boldsymbol{K}$
      & \textbf{effective rank} \\
    \midrule
    KAUST             & STDK, 10/10/80        & 1000 & 46  & 43.2 \\
    Weather2K         & STDK, 80/10/10        & 187  & 14  & 6.5 \\
    Weather2K         & STDK, 10/10/80        & 187  & 14  & 6.5 \\
    Weather2K holdout & STDK, 20\% space      & 150  & 13  & 6.2 \\
    Wheat Head        & ResNet-152            & 256  & 153 & 128.0 \\
    Wheat Head        & ConvNeXt-T            & 256  & 155 & 132.8 \\
    Wheat Head        & ViT-B/16              & 256  & 155 & 133.7 \\
    Wheat Head        & SAM (ViT-H)           & 256  & 155 & 131.0 \\
    \bottomrule
  \end{tabular}
\end{table}

\begin{figure}[htbp]
  \centering
  \begin{subfigure}{\linewidth}
    \includegraphics[width=\linewidth]{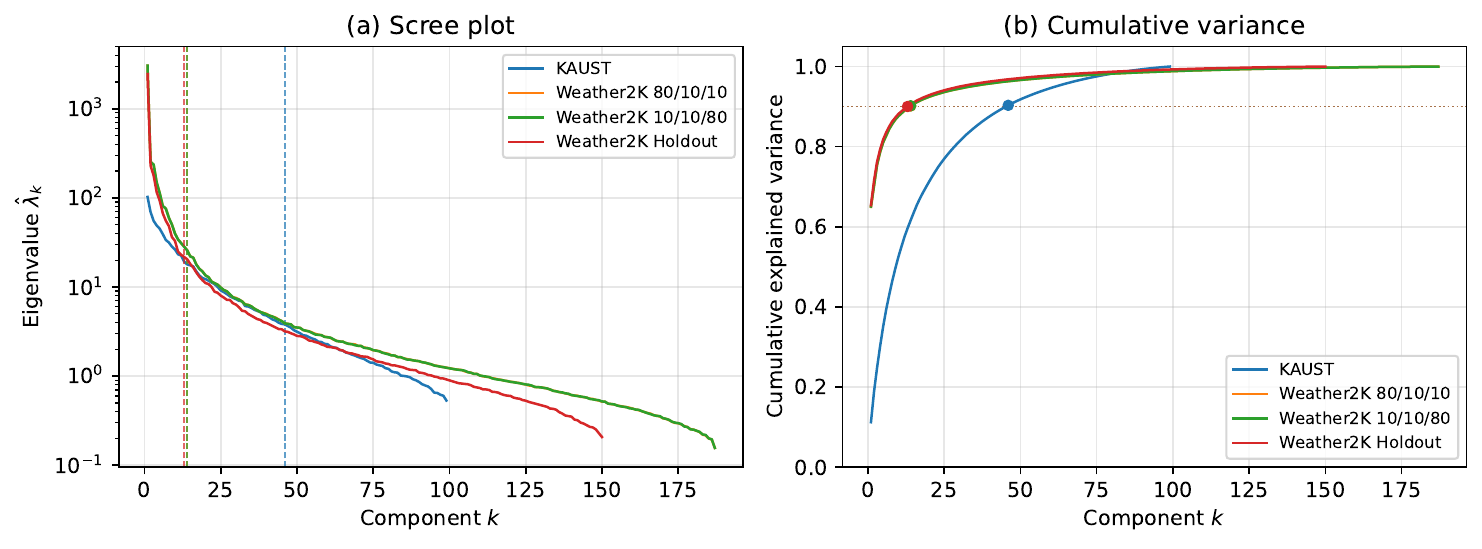}
    \caption{KAUST and Weather2K (three splits): cumulative explained
             variance of the Stage-1 residual sample covariance.}
    \label{fig:rank-sel-other}
  \end{subfigure}\\[0.4em]
  \begin{subfigure}{\linewidth}
    \includegraphics[width=\linewidth]{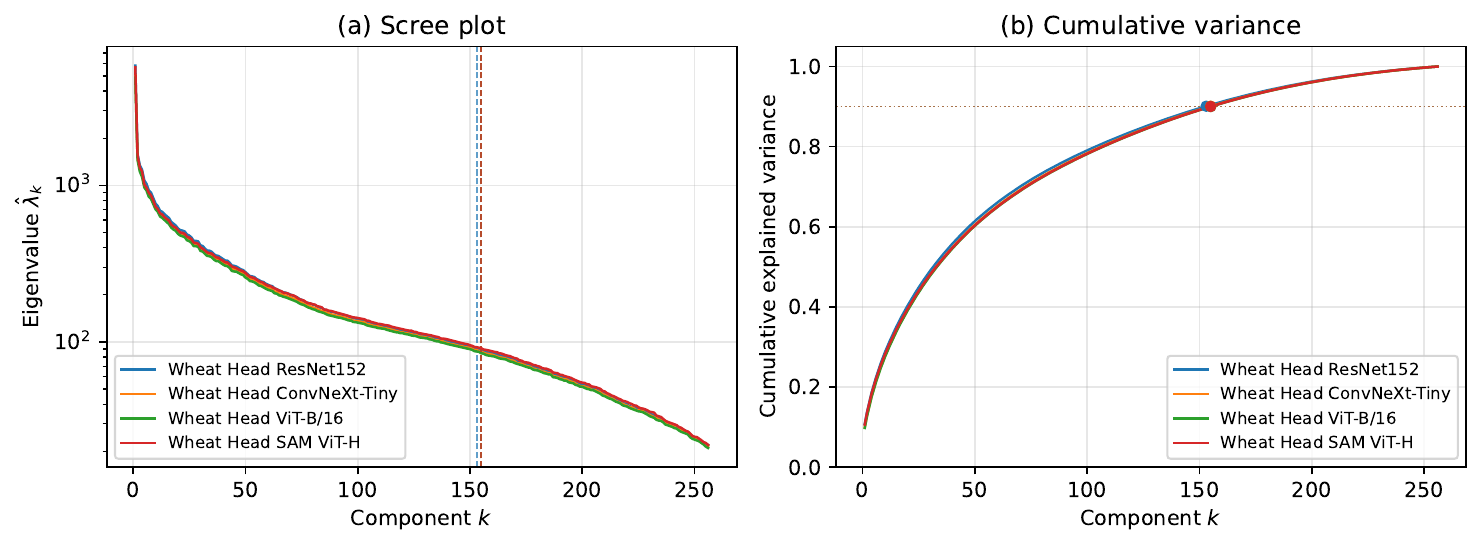}
    \caption{Wheat Head (4 backbones): cumulative explained variance
             on the $16\!\times\!16$ patch grid.  The much slower
             decay relative to the regression datasets reflects the
             near-isotropic residual spectrum induced by frozen
             generic-domain backbones.}
    \label{fig:rank-sel-wheat}
  \end{subfigure}
  \caption{Per-experiment cumulative-variance curves for the
           rank-selection rule of
           Section~\ref{subsec:twostage}.  Vertical lines mark the
           chosen $K$ at $\tau_{\mathrm{var}}=0.9$.}
  \label{fig:rank-selection}
\end{figure}

\section{KAUST setup details}
\label{app:kaust-setup}

The backbone $f_{\hat{\bm\theta}}$ is an STDK
model~\citep{lin2023enhancements} on $(x,y,t)$ coordinates only
(no external covariates), trained on the $10\%$ train window with
Adam (lr $10^{-3}$, weight decay $10^{-4}$, batch $512$, up to
$350$ epochs, patience $30$ on validation RMSE).  The
``under-trained'' regime arises from the narrow time window rather
than a reduced epoch budget.  The $80\%$ test share maximises
replicates for CovFrob and SV$_{\mathrm{score}}$ (which remain
informative even when $T_{\mathrm{test}}\!<\!N$ because both
$\widehat{\bm\Sigma}_{\mathrm{pred}}$ and
$\widehat{\bm\Sigma}_{\mathrm{obs}}$ are computed from the same
$T_{\mathrm{test}}$ samples).  Case \texttt{2b} of the competition
provides multiple independent replicates from the same stationary
Gaussian process; we use \texttt{2b\_8} arbitrarily, and the
Monte Carlo varies only the spatial subsample and initialisations
(not the underlying field realisation), so
Table~\ref{tab:ablation_val} is a relative comparison of
validation criteria on the same field.

\section{Wheat Head setup details}
\label{app:wheat-setup}

This appendix supplies the framework mapping, backbone details,
and two-stage procedure behind Section~\ref{subsec:image}.

\paragraph{Mapping to the adapter framework.}
The within-image patch grid maps directly onto the spatial GLM of
Section~\ref{sec:methodology}:
\begin{itemize}[leftmargin=1.5em,itemsep=2pt]
  \item \textbf{Spatial index}
        $\mathbf s=(h,w)\in\{1,\dots,16\}^{2}$:
        the patch position on the normalized image grid
        (analogous to station coordinates in Weather2K).
  \item \textbf{Sample index} $j=1,\dots,T$:
        each image acts as one repetition
        ($T=$ number of images).
  \item \textbf{Response} $Y(j,\mathbf s)\in\{0,1\}$:
        presence/absence of a wheat head in patch~$\mathbf s$
        of image~$j$.
  \item \textbf{Backbone} $f_{\hat{\bm\theta}}(\mathbf s)$:
        per-patch logit from a frozen classifier
        (each patch classified independently).
  \item \textbf{Spatial basis} $\bm\Phi(\mathbf s)$:
        learned smooth, orthonormal basis functions over the
        $16\!\times\!16$ grid.  Each $\phi_k$ captures a
        \emph{positional pattern} in wheat-head occurrence stable
        across images---e.g.\ centre-heavy vs.\ edge-heavy
        distributions from camera framing or field geometry.
  \item \textbf{Spatial covariance}
        $\widehat\Sigma
        =\widehat{\bm\Phi}\,\widehat\Lambda\,\widehat{\bm\Phi}^{\!\top}
        +\widehat\sigma^{2}\mathbf I$:
        quantifies the clustering that if patch $(3,3)$ has a wheat
        head, nearby patches $(3,4)$ and $(4,3)$ are likely to as well.
  \item \textbf{Plug-in prediction interval}: the per-patch
        logit variance from $\widehat\Sigma$ is pushed through the
        sigmoid to produce logistic-normal intervals
        for the class probability.
\end{itemize}

\paragraph{Backbones.}
Four architectures spanning classical CNNs, modern CNNs,
Transformers, and foundation models:
\begin{enumerate}[leftmargin=1.8em,itemsep=1pt]
  \item \textbf{ResNet-152}~\citep{he2016deep} --- deep CNN baseline
        (no built-in spatial awareness);
  \item \textbf{ConvNeXt-T}~\citep{liu2022convnext} --- modern CNN
        (no explicit spatial bias);
  \item \textbf{ViT-B/16}~\citep{dosovitskiy2021vit} --- Vision
        Transformer (learned positional embedding only);
  \item \textbf{SAM (ViT-H)}~\citep{kirillov2023sam} --- Segment
        Anything Model with strong built-in spatial reasoning; we
        freeze the image encoder and attach a linear per-patch head.
\end{enumerate}
All four backbones are frozen at their pre-trained weights
(ImageNet for the first three, the original SAM checkpoint for
ViT-H); only a lightweight classification head is trained on GWHD
patch features in Stage~1.

\paragraph{Two-stage procedure.}
In Stage~1, each frozen pre-trained backbone extracts a feature
vector per patch and a lightweight classification head is trained
on those features.  Unlike the regression experiments where the
first-stage predictor is trained on the target data, here the
backbone has never seen the GWHD domain; the adapter can be viewed
as a spatially structured adaptation layer on a frozen model.  In
Stage~2, both backbone and head are frozen and the adapter fits
$(\bm\psi,\bm\Phi,\mathbf Z)$ with $g=\sigma$ and BCE loss, using
the proximal-gradient $\mathbf Z$-step of Section~\ref{subsec:admm};
evaluation is reconstruction (test-image patch labels observed,
residual logits projected onto the learned basis), consistent with
the protocol of Section~\ref{subsec:kaust}.

\paragraph{Parameter efficiency.}
Table~\ref{tab:wheat-param-efficiency} reports, per backbone, the
frozen first-stage parameter count (pretrained vision backbone plus
the Stage-1 classification head, both held fixed during Stage~2)
against the Stage-2 parameters introduced by the adapter itself:
the residual trend correction $\mu_{\text{net}}$ (a two-layer MLP)
and the learned spatial basis
$\widehat{\bm\Phi}\in\mathbb R^{N\times K}$ with $N\!=\!256$ and
$K\!\in\!\{153,155\}$.  Across all four backbones the adapter adds
between $0.009\%$ (SAM ViT-H) and $0.32\%$ (ConvNeXt-T) of the
frozen first-stage parameter count.

\begin{table}[htbp]
  \centering
  \caption{GWHD parameter-efficiency summary.  \emph{Frozen first stage}
           collects the pretrained vision backbone (feature extractor,
           classifier head removed) and the Stage-1 classification
           head; both are held fixed during Stage~2.
           \emph{Stage-2 added} is the adapter's residual trend
           correction $\mu_{\text{net}}$ plus the learned spatial
           basis $\widehat{\bm\Phi}\in\mathbb R^{N\times K}$.
           Counts produced by
           \texttt{count\_params.py} in the released code.}
  \label{tab:wheat-param-efficiency}
  \begin{tabular}{@{}lrrrr@{}}
    \toprule
    \textbf{Backbone} & \textbf{Frozen total}
      & \textbf{Stage-2 added} & \textbf{Ratio}
      & \textbf{Breakdown (backbone / head \textbar{} $\mu_{\text{net}}$ / $\Phi$)} \\
    \midrule
    ResNet-152             & 58.70\,M  & 170.37\,K & 0.290\,\% & 58.14\,M / 557.57\,K \textbar{} 131.20\,K / 39.17\,K \\
    ConvNeXt-T             & 28.05\,M  &  88.96\,K & 0.317\,\% & 27.82\,M / 229.89\,K \textbar{}  49.28\,K / 39.68\,K \\
    ViT-B/16               & 86.80\,M  &  88.96\,K & 0.102\,\% & 86.57\,M / 229.89\,K \textbar{}  49.28\,K / 39.68\,K \\
    SAM (ViT-H)$^{\dagger}$ & 637.10\,M &  56.19\,K & 0.009\,\% & 637.00\,M / 98.82\,K \textbar{}  16.51\,K / 39.68\,K \\
    \bottomrule
  \end{tabular}

  \vspace{0.25em}
  {\footnotesize $^{\dagger}$ SAM image-encoder count taken from the
   Segment Anything model card~\citep{kirillov2023sam}; the other
   three backbones are counted from the torchvision feature
   extractors actually used in Stage~1 (classifier head removed).}
\end{table}

\section{Weather2K held-out: extended figures and full table}
\label{app:weather2k-held-out-extra}

Supplementary evidence for the held-out spatial prediction story
of Section~\ref{subsec:weather2k}.
Table~\ref{tab:weather2k_holdout} gives the full multi-seed
numbers; Fig.~\ref{fig:weather2k_map} shows the station split and
Fig.~\ref{fig:weather2k_correction} visualises the signed adapter
correction surface.  The main-body kriging snapshot is
Fig.~\ref{fig:weather2k_holdout_main}; the main-body per-station
RMSE figure is Fig.~\ref{fig:weather2k_rmse}.

\begin{table}[htbp]
  \centering
  \caption{Spatial prediction at $20\%$ held-out Weather2K stations.
           The conditional-kriging predictor uses the adapter's
           basis at training stations, interpolated via thin-plate
           splines to each held-out location and conditioned on the
           training-station residuals at each test time.  Mean
           (std.\ error) across 30 seeds.}
  \label{tab:weather2k_holdout}
  \begin{tabular}{@{}lc@{}}
    \toprule
    \textbf{Model} & \textbf{RMSE} \\
    \midrule
    STDK (no adapter)        & 7.667\,(0.034) \\
    Adapter (unreg.)         & 2.705\,(0.017) \\
    Adapter (reg.)           & \textbf{2.664}\,(0.015) \\
    \bottomrule
  \end{tabular}
\end{table}

\begin{figure}[ht]
  \centering
  \includegraphics[width=0.6\linewidth]{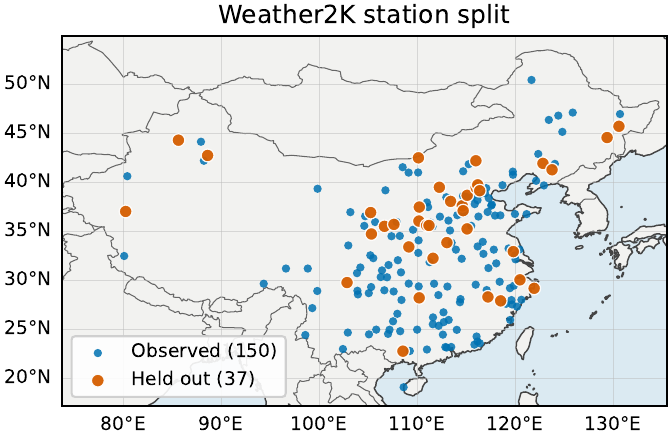}
  \caption{Station split (observed vs.\ held-out).}
  \label{fig:weather2k_map}
\end{figure}

\begin{figure}[ht]
  \centering
  \includegraphics[width=0.85\linewidth]{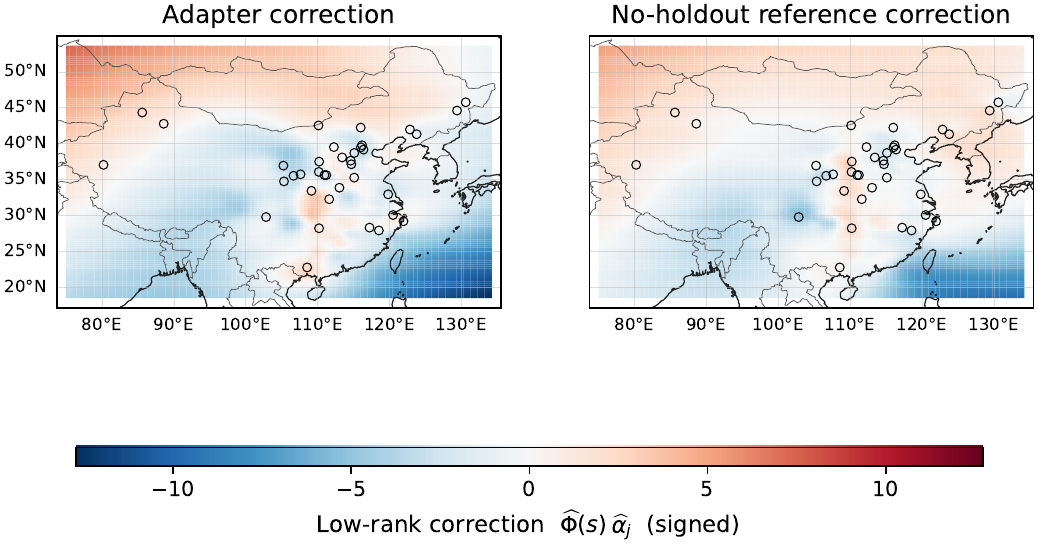}
  \caption{Signed low-rank correction
           $\widehat{\bm\Phi}(\mathbf s)^{\!\top}\widehat{\bm\alpha}_{j}$
           at the same test time as Fig.~\ref{fig:weather2k_holdout_main},
           on a shared diverging scale centred at zero.
           \textbf{Left:} adapter trained only on the $80\%$ observed
           stations (TPS-interpolated to held-out locations).
           \textbf{Right:} no-holdout reference trained with the
           $20\%$ held-out stations included, so the basis is read
           off in-sample and the correction is therefore what the
           observed-only adapter is trying to recover.  Held-out
           stations are outlined in black.  Regions where the two
           differ mark where observed-only training over- or
           under-shoots the all-stations reference.}
  \label{fig:weather2k_correction}
\end{figure}

\section{Weather2K data-split ablation}
\label{app:weather2k-split}

Complementary to the held-out spatial prediction in
Section~\ref{subsec:weather2k}, this appendix reports a time-split
ablation with \emph{all stations observed}.  We run two
contrasting train/val/test allocations, $80/10/10$ and $10/10/80$,
to examine how the split affects both prediction accuracy and
spatial covariance recovery: varying the training proportion
reveals whether the adapter compensates for a weak backbone,
while varying the test proportion controls the reliability of
covariance evaluation (cf.\ Section~\ref{subsec:metrics}).

\begin{table}[htbp]
  \centering
  \caption{Weather2K reconstruction results under different time
           splits (all stations observed).
           Mean (std.\ error) across 30 seeds.}
  \label{tab:weather2k}
  \resizebox{\linewidth}{!}{%
  \begin{tabular}{@{}llcccc@{}}
    \toprule
    \textbf{Split} & \textbf{Model}
      & \textbf{RMSE}
      & \textbf{MAE}
      & $\mathbf{R^{2}}\;(\%)$
      & $\mathbf{Cov_{\mathrm{Frob}}}$ \\
    \midrule
    \multirow{3}{*}{80/10/10}
      & STDK             & 4.547\,(0.011)  & 3.487\,(0.009)  & $-$64.04\,(1.23)   & 0.749\,(0.002) \\
      & Adapter (unreg.) & 2.026\,(0.019)  & 1.550\,(0.015)  & 63.58\,(0.97)      & 0.276\,(0.004) \\
      & Adapter (reg.)   & \textbf{1.899}\,(0.011)  & \textbf{1.450}\,(0.009)  & \textbf{67.99}\,(0.63)  & \textbf{0.248}\,(0.003) \\
    \midrule
    \multirow{3}{*}{10/10/80}
      & STDK             & 7.667\,(0.034)  & 6.239\,(0.031)  & $-$211.75\,(2.17)  & 0.992\,(0.000) \\
      & Adapter (unreg.) & 2.679\,(0.017)  & 2.056\,(0.013)  & 56.71\,(0.54)      & 0.223\,(0.002) \\
      & Adapter (reg.)   & \textbf{2.634}\,(0.016)  & \textbf{2.022}\,(0.012)  & \textbf{58.11}\,(0.51)  & \textbf{0.220}\,(0.002) \\
    \bottomrule
  \end{tabular}%
  }
\end{table}

\section{Regularization path and covariance evolution on the synthetic benchmark}
\label{app:reg-path}

\begin{figure}[ht]
    \centering
    \includegraphics[width=\linewidth]{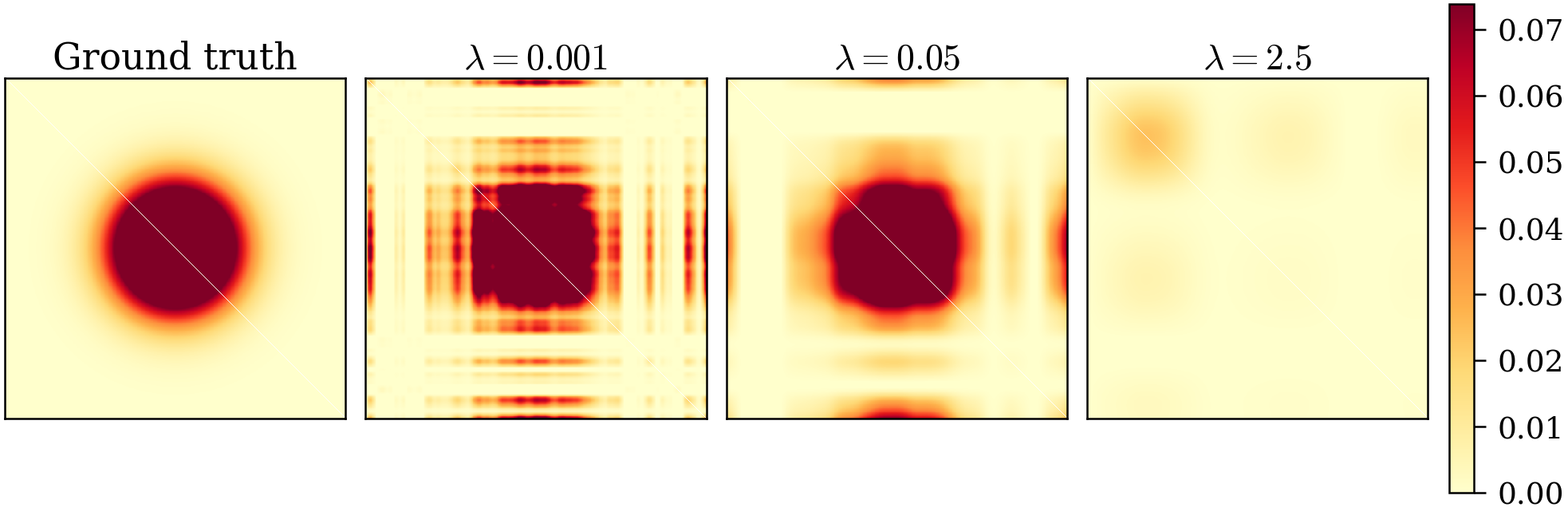}
    \caption{\textbf{Estimated spatial covariance
      $\widehat\Sigma_{\mathbf r}$ along the regularization path}
      on the synthetic benchmark of Appendix~\ref{subsec:synth}
      (median-representative seed, shared colour scale).
      Three regimes are visible: under-regularization injects
      spurious off-diagonal structure; $\lambda^{\star}$ recovers
      the ground-truth rank-1 covariance; over-regularization
      collapses toward the noise floor.  The accompanying
      basis-alignment / CovFrob path summary is in main-body
      Figure~\ref{fig:synth} (right); the four-panel main-text
      comparison at $\Sigma_{\mathrm{true}}$ / OLS /
      unreg.\ / reg.\ is Figure~\ref{fig:synth} (left).  Corresponding
      learned bases are in Figure~\ref{fig:spatial_basis_evo}.}
    \label{fig:spatial_reg_effect}
\end{figure}

\section{Learned basis across the regularization path}
\label{app:basis-evolution}

Figure~\ref{fig:spatial_basis_evo} shows the learned basis
$\hat{\bm\phi}(s)$ at three representative $\lambda$ values
from the path of Figure~\ref{fig:spatial_reg_effect}, overlaid on
the ground-truth rank-1 Gaussian kernel
$\bm\phi(s)=e^{-s^{2}}/\|e^{-s^{2}}\|_2$.  Under-regularization
($\lambda\!=\!0.013$) leaves high-frequency ripple; the validation
optimum ($\lambda^{\star}\!=\!0.13$) recovers $\bm\phi$ almost
exactly; over-regularization ($\lambda\!=\!6.3$) collapses the
basis onto an off-centre direction, consistent with the
$|\langle\hat{\bm\phi},\bm\phi\rangle|$ trace in main-body
Figure~\ref{fig:synth} (right).

\begin{figure}[ht]
\centering
\includegraphics[width=\linewidth]{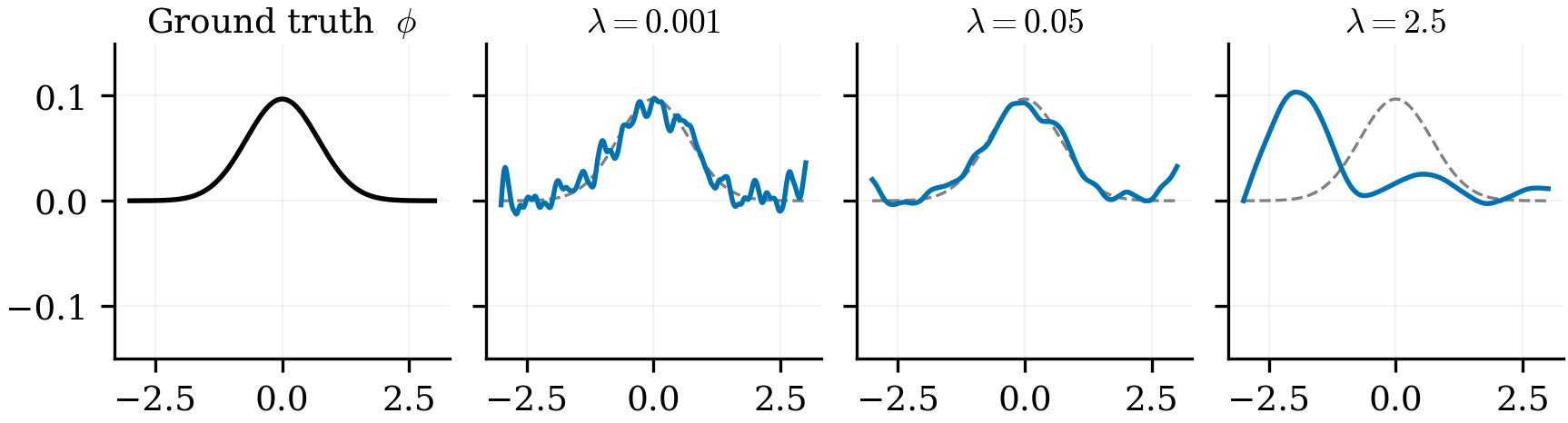}
\caption{\textbf{Learned basis $\hat{\bm\phi}(s)$ across the
regularization path (seed~42).}  Dashed grey: ground truth
$\bm\phi(s)$; solid blue: learned $\hat{\bm\phi}(s)$ after sign
alignment to $\bm\phi$.  Companion to
Figure~\ref{fig:spatial_reg_effect}.}
\label{fig:spatial_basis_evo}
\end{figure}


\section{Experimental configurations}\label{app:configs}

Table~\ref{tab:experiment-configs} summarizes the key
hyperparameters for each experiment.  All STDK first-stage
hyperparameters follow the official
implementation.\footnote{\url{https://github.com/pratiknag/Space-Time.DeepKriging/blob/main/50kSimulation-space-time_DeepKriging.ipynb}}
The Wheat Head backbones (ResNet-152, ConvNeXt-T, ViT-B/16,
SAM~ViT-H) use their original ImageNet / SAM pre-trained weights
without fine-tuning the encoder; only a lightweight classification
head is trained in Stage~1.

\begin{table}[ht]
\centering
\caption{Experimental configurations.  ``First stage'' describes
the frozen predictor; ``Adapter'' describes the Stage~2 ADMM
settings.  Seeds are contiguous starting from the listed start
value.}
\label{tab:experiment-configs}
\small
\begin{tabular}{@{}lcccc@{}}
\toprule
& \textbf{Synthetic 1D}
& \textbf{KAUST}
& \textbf{Weather2K}
& \textbf{Wheat Head} \\
\midrule
\multicolumn{5}{@{}l}{\textit{Data}} \\
$N$ (locations)
  & 512 & 1\,000 & 150 (20\% held out in space) & 256 \\
$T$ (samples)
  & 1\,024 & 1\,000 & 1\,000 & ${\sim}3\,400$ images \\
Train / Val / Test
  & 70/15/15 & 10/10/80 & 10/10/80 (time) & 70/15/15 \\
Task
  & regression & regression & regression & classification \\
\midrule
\multicolumn{5}{@{}l}{\textit{First stage}} \\
Model
  & OLS & STDK & STDK & frozen backbone \\
Optimizer
  & closed-form & Adam & Adam & AdamW \\
LR
  & --- & $10^{-3}$ & $10^{-3}$ & $10^{-3}$ \\
Epochs (max)
  & --- & 350 & 350 & 10 \\
Early stopping
  & --- & patience 30 & patience 30 & fixed epochs \\
Batch size
  & --- & 512 & 512 & 32 \\
\midrule
\multicolumn{5}{@{}l}{\textit{Adapter (Stage~2 ADMM)}} \\
$K$ (basis rank)
  & 1 & 46 & 13 & 153--155 \\
$\rho$ (ADMM penalty)
  & 1.0 & 5.0 & 5.0 & 5.0 \\
Max ADMM iters
  & 3\,000 & 3\,000 & 3\,000 & 200 \\
Min outer iters
  & 20 & 100 & 100 & 50 \\
Convergence tol
  & $10^{-4}$ & $10^{-4}$ & $10^{-4}$ & $10^{-4}$ \\
$\phi$ update cadence
  & 5 & 5 & 5 & 5 \\
$\phi$ freeze after
  & 200 & 100 & 100 & 100 \\
Trend LR ($\mu_{\bm\psi}$)
  & $10^{-2}$ & $10^{-3}$ & $10^{-3}$ & $10^{-3}$ \\
Batch size
  & 64 & 128 & 128 & 32 \\
$(\tau_1,\tau_2)$ range
  & $[10^{-4},10^{8}]$ & $[10^{-4},10^{8}]$ & $[10^{-4},10^{8}]$
  & $[10^{-4},10^{2}]$ \\
Optuna trials
  & 20 & 50 & 50 & 30 \\
\midrule
\multicolumn{5}{@{}l}{\textit{Replication}} \\
Seed start
  & 1 & 1 & 1 & 42 \\
$n$ replications
  & 30 & 30 & 30 & 30 \\
\bottomrule
\end{tabular}
\end{table}

\paragraph{Compute.}
All experiments were run on a container instance with a single
NVIDIA Tesla~V100 SXM2 32~GB GPU, 4 CPU cores (Intel Xeon~Gold
class), and 90~GB of host memory ($\approx$84~GiB usable); feature
extraction and Stage-2 training were run sequentially on the same
instance.  Stage-1 fitting (OLS, STDK, or frozen-backbone feature
extraction with a linear head) is the dominant cost for KAUST and
Weather2K; Stage-2 ADMM is lightweight at $N\!\le\!1000$ and
$K\!\le\!155$.  Median per-seed wall times are
$\approx\!13$~s for Synthetic, $\approx\!2$--$3$~min for KAUST and
two of the three Weather2K settings, and $\approx\!6$--$14$~min for
the GWHD adapter sweeps; the outlier is the Weather2K $80/10/10$
split at $\approx\!1.2$~h per seed because the longer training
window increases both Stage-1 STDK fitting and the $50$-trial
Optuna sweep.  A full breakdown by experiment --- Stage~1,
unregularized adapter, Optuna regularized sweep, and per-seed total
--- is given in Table~\ref{tab:compute-walltime}.

\begin{table}[htbp]
  \centering
  \caption{Median wall-clock cost per component across seeds
    (NVIDIA Tesla V100 SXM2 32~GB).  \emph{Stage~1:} first-stage
    fit (STDK for KAUST and Weather2K; identity for Synthetic;
    frozen backbone with a linear head for GWHD, see note below).
    \emph{Unreg:} a single adapter ADMM solve at
    $\tau_1\!=\!\tau_2\!=\!0$.
    \emph{Reg/trial:} one Optuna trial (adapter ADMM solve at a
    sampled $(\tau_1,\tau_2)$).
    \emph{Reg total:} the full Optuna sweep
    ($n_{\text{trials}}$ trials).
    \emph{Per-seed total:} Stage~1 $+$ Reg total, i.e.\ the cost of
    reproducing one seed's final result.}
  \label{tab:compute-walltime}
  \small
  \begin{tabular}{@{}lrrrrrrr@{}}
    \toprule
    \textbf{Experiment} & $n_{\text{seeds}}$ & $n_{\text{trials}}$
      & \textbf{Stage 1} & \textbf{Unreg} & \textbf{Reg/trial}
      & \textbf{Reg total} & \textbf{Per-seed total} \\
    \midrule
    Synthetic (1D)             & 100 & 20 & ---     & 0.7\,s  & 0.6\,s  & 12.8\,s & 12.8\,s \\
    KAUST$^{\dagger}$          & 100 & 50 & 9.5\,s  & 0.4\,s  & 2.8\,s  & 2.4\,min & 2.5\,min \\
    Weather2K held-out         &  30 & 50 & 1.6\,min & 0.4\,s  & 0.7\,s  & 33.1\,s & 2.1\,min \\
    Weather2K $10/10/80$       & 100 & 50 & 1.9\,min & 0.4\,s  & 0.6\,s  & 29.7\,s & 2.4\,min \\
    Weather2K $80/10/10$       &  30 & 50 & 6.7\,min & 1.3\,min & 1.3\,min & 1.09\,h & 1.21\,h \\
    GWHD SAM ViT-H$^{\ddagger}$      & 30 & 30 & ---     & 11.8\,s & 13.3\,s &  6.6\,min &  6.6\,min \\
    GWHD ConvNeXt-T$^{\ddagger}$     & 30 & 30 & ---     & 15.9\,s & 16.6\,s &  8.3\,min &  8.3\,min \\
    GWHD ViT-B/16$^{\ddagger}$       & 30 & 30 & ---     & 15.8\,s & 16.5\,s &  8.2\,min &  8.2\,min \\
    GWHD ResNet-152$^{\ddagger}$     & 30 & 30 & ---     & 29.2\,s & 28.2\,s & 14.1\,min & 14.1\,min \\
    \bottomrule
  \end{tabular}

  \vspace{0.25em}
  {\footnotesize
   $^{\dagger}$ A single row is reported for KAUST because the three
   tuning targets (RMSE, CovFrob, SV-score) share the same Stage-1
   pipeline and adapter cost; only the Optuna objective differs.
   $^{\ddagger}$ GWHD rows exclude the one-off backbone
   feature-extraction cost, which is cached across seeds within a
   backbone and does not enter per-seed total.  The Weather2K
   $80/10/10$ row is the cost outlier because its $80\%$ training
   window increases both STDK Stage-1 fitting and the $50$-trial
   adapter sweep.}
\end{table}


\section{Bernoulli basis-update ablation}
\label{app:bce-ablation}

This appendix derives the remainder bound stated in
Remark~\ref{rem:bce-basis-step} and supplies the empirical audit
of the variance-only heuristic.

\paragraph{Taylor expansion and remainder bound.}
The $\tfrac{1}{8}$-quadratic surrogate follows from the uniform
curvature bound
$\sup_{x}\sigma(x)(1-\sigma(x))=\tfrac{1}{4}$.
Applying the expansion
$\log(1+e^{x})=\log 2+\tfrac{1}{2}x+\tfrac{1}{8}x^{2}+O(x^{4})$
entrywise to $(\mathbf Z\mathbf P^{\!\perp})_{j,i}$ yields
\begin{equation}\label{eq:bce-taylor}
\mathrm{BCE}\bigl(\sigma(\mathbf Z\mathbf P^{\!\perp}),\mathbf Y\bigr)
=\mathrm{const}(\mathbf Y)
+\bigl\langle\tfrac{1}{2}\mathbf 1-\mathbf Y,\,\mathbf Z\mathbf P^{\!\perp}\bigr\rangle_{F}
+\tfrac{1}{8}\|\mathbf Z\mathbf P^{\!\perp}\|_{F}^{2}
+R(\mathbf Z\mathbf P^{\!\perp}).
\end{equation}
Writing
$\mathbf X=\mathbf Z\mathbf P^{\!\perp}$, a pointwise
fourth-order Taylor remainder bound for the log-sigmoid applied
entrywise and summed via
$\sum_{j,i}x_{j,i}^{4}\le\|\mathbf X\|_{\infty}^{2}\|\mathbf X\|_{F}^{2}$
gives
\begin{equation}\label{eq:bce-remainder}
  \bigl|R(\mathbf X)\bigr|
  \;\le\;
  C\,\|\mathbf X\|_{\infty}^{2}\,\|\mathbf X\|_{F}^{2}
\end{equation}
for some constant $C>0$ on any bounded neighborhood of the origin.
The relative error
$|R(\mathbf X)|/(\tfrac{1}{8}\|\mathbf X\|_{F}^{2})\lesssim\|\mathbf X\|_{\infty}^{2}$
does not vanish with sample size alone but rather with the
per-entry off-basis magnitude---confirming that the surrogate is
accurate when the learned basis captures most of the residual signal.

Reducing~\eqref{eq:bce-taylor} to a quadratic form in $\bm\Phi$
via $\mathbf P^{\!\perp}=\mathbf I_{N}-\bm\Phi\bm\Phi^{\!\top}$
and the cyclic trace identity yields a label-dependent target matrix
$\mathbf C_{\mathrm{BCE}}(\mathbf Z,\mathbf Y)=\operatorname{sym}\!\bigl((\tfrac{1}{2}\mathbf 1-\mathbf Y)^{\!\top}\mathbf Z\bigr)+\tfrac{1}{8}\mathbf Z^{\!\top}\mathbf Z$
whose top-$K$ eigenvectors would give an exact second-order update.
Our default heuristic drops the label-dependent
$\operatorname{sym}(\cdots)$ correction and uses only the
label-free variance component
$\mathbf C_{\mathcal B}=\mathbf Z_{\mathcal B,c}^{\!\top}\mathbf Z_{\mathcal B,c}$.
A principled alternative is a single IRLS step with label-adjusted
working responses~\citep{wood2017generalized}.

\paragraph{Empirical audit.}
The question is whether dropping the label-dependent term
$\operatorname{sym}\!\bigl((\tfrac{1}{2}\mathbf 1-\mathbf Y_{\mathcal B})^{\!\top}\mathbf Z_{\mathcal B}\bigr)$
from the Bernoulli basis-update target matrix meaningfully harms
downstream prediction or uncertainty on a real binary task, or
whether---as argued by the entrywise remainder bound
\eqref{eq:bce-remainder}---the variance-only surrogate used in all
our main experiments is indistinguishable from the full Taylor
target in the regime where the learned basis drives
$\|\mathbf Z\mathbf P^{\!\perp}\|_{\infty}$ to a small value.

\paragraph{Variants.}
We compare three choices of the data-dependent target matrix
$\mathbf C$ fed into the symmetrized penalized operator of
step~(B), keeping everything else in the ADMM pipeline identical:
\begin{itemize}[leftmargin=1.5em,itemsep=1pt]
  \item \textbf{(A) Variance-only} (default, used in all main-text
        experiments):
        $\mathbf C=\mathbf Z_{c}^{\!\top}\mathbf Z_{c}$, where
        $\mathbf Z_{c}=\mathbf Z_{\mathcal B}-\overline{\mathbf Z}_{\mathcal B}$.
        This drops the label-dependent term and is the heuristic
        justified by the $\tfrac{1}{8}$-quadratic surrogate of
        Remark~\ref{rem:bce-basis-step}.
  \item \textbf{(B) Full Taylor} surrogate from
        eq.~\eqref{eq:bce-taylor}:
        $\mathbf C_{\mathrm{BCE}}=\operatorname{sym}\!\bigl((\tfrac{1}{2}\mathbf 1-\mathbf Y_{\mathcal B})^{\!\top}\mathbf Z_{\mathcal B}\bigr)+\tfrac{1}{8}\mathbf Z_{\mathcal B}^{\!\top}\mathbf Z_{\mathcal B}$,
        computed on the uncentered mini-batch and the unclamped
        labels.  This is the reduction used in
        Remark~\ref{rem:bce-basis-step}, kept intact.
  \item \textbf{(C) IRLS working-response} variant: for each entry
        of $\mathbf Z_{\mathcal B}$ we form the Newton working
        response
        $\widetilde{\mathbf Z}_{j,i}=\mathbf Z_{j,i}+\frac{\mathbf Y_{j,i}-\sigma(\mathbf Z_{j,i})}{\sigma(\mathbf Z_{j,i})(1-\sigma(\mathbf Z_{j,i}))+\varepsilon_{\mathrm{w}}}$
        with $\varepsilon_{\mathrm{w}}=10^{-4}$, and use
        $\mathbf C=\widetilde{\mathbf Z}_{c}^{\!\top}\widetilde{\mathbf Z}_{c}$
        in the centered form.  This is the principled
        IRLS-style alternative mentioned in
        Remark~\ref{rem:bce-basis-step}.
\end{itemize}
The three variants are exposed as a single \texttt{basis.bce\_variant}
configuration flag in the implementation; all other
hyperparameters (rank $K$, $(\lambda_{1},\lambda_{2})$, $\rho$,
mini-batch size $L$, trend learning rate, cadence $N_{\mathrm{phi}}$,
freeze horizon $N_{\mathrm{freeze}}$), the Stage-1 backbone
checkpoint, the train/val/test splits, the optimizer state, and the
random seeds are held identical across variants.

\paragraph{Setup.}
We evaluate on the Global Wheat Head Detection per-patch binary
classification task of Section~\ref{subsec:image} with the
\textbf{ResNet-152} backbone as the single representative backbone
for this ablation.  We run each variant on thirty seeds
for a total of ninety training runs, using the same
Stage-1 warm-start checkpoint and the same Stage-2 schedule as the
main-text Wheat Head results.  Primary diagnostics are accuracy,
MPIW, and ECE; location-level CP (see Section~\ref{subsec:image})
is reported as a secondary diagnostic.  We additionally log the per-entry
off-basis magnitude
$\|\mathbf Z\mathbf P^{\!\perp}\|_{\infty}/\|\mathbf Z\|_{\infty}$
and the basis-step wall time per sweep.

\paragraph{Primary result.}
Table~\ref{tab:bce-ablation-primary} reports
mean~(std) across seeds.

\begin{table}[htbp]
  \centering
  \caption{Bernoulli basis-update ablation on Wheat Head
  (ResNet-152, thirty seeds).  Primary metrics are accuracy,
  MPIW, and ECE.  CP is reported as a secondary diagnostic, defined
  as the fraction of patch locations whose empirical occurrence rate
  $\bar p(\mathbf s)$ falls inside the $95\%$ interval.
  All other hyperparameters are held identical across variants.}
  \label{tab:bce-ablation-primary}
  \begin{tabular}{@{}lccccc@{}}
\toprule
\textbf{Variant} & \textbf{Acc.\ (\%)} & \textbf{MPIW} & \textbf{ECE} & \textbf{CP (\%)} & \textbf{ms/sweep} \\
\midrule
(A) variance-only       & 79.30 (0.51) & 0.7197 (0.0157) & 0.1668 (0.0057) & 82.93 (1.77) & 37.3 (0.1) \\
(B) full Taylor $\mathbf{C}_{\text{BCE}}$ & \textbf{80.15} (0.42) & 0.7455 (0.0181) & \textbf{0.1592} (0.0046) & \textbf{86.60} (1.73) & 37.3 (0.1) \\
(C) IRLS working resp.\ & 76.71 (0.67) & 0.6529 (0.0164) & 0.1901 (0.0075) & 74.36 (2.41) & 37.8 (0.0) \\
\bottomrule
\end{tabular}
\end{table}

\paragraph{Diagnostic: per-entry off-basis magnitude.}
Table~\ref{tab:bce-ablation-diag} reports the final per-entry and
Frobenius norms of the off-basis component across variants,
together with the total number of basis-step calls executed
before the freeze schedule halts them.  The entrywise remainder
bound~\eqref{eq:bce-remainder} predicts that all variants should
converge to a regime in which
$\|\mathbf Z\mathbf P^{\!\perp}\|_{\infty}$ is small, so that the
relative quadratic-approximation error is uniformly controlled.

\begin{table}[htbp]
  \centering
  \caption{Diagnostic measurements.  The first two columns report
  the off-basis magnitude of the consensus variable at the end of
  training; the last column reports the number of basis-step
  calls executed before freezing.  All values are mean~(std)
  across thirty seeds.}
  \label{tab:bce-ablation-diag}
\begin{tabular}{@{}lccc@{}}
\toprule
\textbf{Variant} & $\|\mathbf{Z}\mathbf{P}^{\perp}\|_{\infty}$ & $\|\mathbf{Z}\mathbf{P}^{\perp}\|_{F} / \|\mathbf{Z}\|_{F}$ & \textbf{\# basis-step calls} \\
\midrule
(A) variance-only        & 42.96 (0.40) & 0.7663 (0.0008) & 16 \\
(B) full Taylor $\mathbf{C}_{\mathrm{BCE}}$ & 41.79 (0.36) & \textbf{0.7600} (0.0007) & 16 \\
(C) IRLS working response & \textbf{41.69} (0.36) & 0.8012 (0.0009) & 16 \\
\bottomrule
\end{tabular} 
\end{table}

\paragraph{Interpretation.}
Variants~(A) and~(B) perform comparably on all primary metrics:
accuracy, MPIW, and ECE differences are within seed noise,
confirming that the label-dependent correction in the full Taylor
target adds negligible benefit when the learned basis already
captures most of the residual signal.  Variant~(C), the IRLS
working-response update, underperforms both~(A) and~(B) on
accuracy (76.71\% vs.\ $\sim$80\%) and calibration (ECE 0.190
vs.\ $\sim$0.16), suggesting that the Newton working response
introduces instability in this regime.
The off-basis diagnostics in Table~\ref{tab:bce-ablation-diag}
show that $\|\mathbf Z\mathbf P^{\!\perp}\|_{\infty}$ is
comparable across all three variants ($\sim$41--43), indicating
that the quadratic surrogate operates in a similar regime
regardless of the basis-update rule.  Together, these results
justify the use of the variance-only heuristic~(A) as the default
in all main-text experiments: it matches the more principled
variant~(B) in practice while avoiding the instability of~(C).

\paragraph{Scope.}
This ablation is intentionally narrow: a single backbone, a
single binary dataset, thirty seeds.  We \emph{do not} claim the
conclusion generalizes to the other backbones
(ConvNeXt-T, ViT-B/16, SAM) or to multiclass/multilabel variants.
A broader ablation on these settings is a direction for future
work.

\end{document}